\newtheorem{theorem}{Theorem}
\newtheorem{assumption}{Assumption}
\title{Selective Preference Optimization via Token-Level Reward Function Estimation}
\author{Kailai Yang\textsuperscript{1}\textsuperscript{*}\quad Zhiwei Liu\textsuperscript{1}\textsuperscript{\dag}
\quad 
\textbf{Qianqian Xie}\textsuperscript{\textbf{2},\textbf{3}}\quad \textbf{Jimin Huang}\textsuperscript{\textbf{4}} \\
\textbf{Erxue Min}\textsuperscript{\textbf{5}} \quad
\textbf{Sophia Ananiadou}\textsuperscript{\textbf{1},\textbf{6}}\\
    \textsuperscript{1} National Centre for Text Mining, The University of Manchester\\
    \textsuperscript{2} School of Artificial Intelligence, Wuhan University\\ 
    \textsuperscript{3} Center for Language and Information Research, Wuhan University\\
    \textsuperscript{4} The Fin AI \quad
    \textsuperscript{5} Baidu Inc.\quad
    \textsuperscript{6} Archimedes Research\\
\texttt{\{kailai.yang,zhiwei.liu,sophia.ananiadou\}@manchester.ac.uk}\\
\texttt{\{xqq.sincere,erxue.min\}@gmail.com;jimin@chancefocus.com}\\
}
\begin{document}

\maketitle
\begin{abstract}
Recent advancements in LLM alignment leverage token-level supervisions to perform fine-grained preference optimization. However, existing token-level alignment methods either optimize on all available tokens, which can be noisy and inefficient, or perform selective training with complex and expensive key token selection strategies.
% We believe not all tokens are equally effective in preference alignment, and developing efficient token selection strategies can improve token-level preference optimization algorithms. 
In this work, we propose Selective Preference Optimization (SePO), a novel selective alignment strategy that centers on efficient key token selection without requiring strong, fine-grained supervision signals. We prove the feasibility of Direct Preference Optimization (DPO) as token-level reward function estimators, which applies to any existing alignment datasets and enables cost-efficient token selection with small-scale model sizes and training data. We then train an oracle model with DPO on the target data and utilize the estimated reward function to score all tokens within the target dataset, where only the key tokens are selected to supervise the target policy model with a contrastive objective function. Extensive experiments on three public evaluation benchmarks show that SePO significantly outperforms competitive baseline methods by only optimizing on 30\% key tokens with up to 60\% reduction in  GPU training hours. We also explore SePO as a new paradigm for weak-to-strong generalization, showing
that weak oracle models effectively supervise strong policy models with up to 16.8$\times$ more parameters. SePO also selects useful supervision signals from out-of-distribution data, alleviating the over-optimization problem. The project is \href{https://github.com/SteveKGYang/SePO}{open-sourced here}.
\end{abstract}

\renewcommand{\thefootnote}{\fnsymbol{footnote}}
\footnotetext[1]{Work done in collaboration with Baidu Search Science.}
\footnotetext[2]{Corresponding author.}
\renewcommand{\thefootnote}{\arabic{footnote}}

\section{Introduction}

The recent development of large language models (LLMs) has focused on aligning model outputs with human preferences~\citep{ji2023ai}. During alignment, LLMs are directly optimized on pairwise data and response-level supervision,
% However, LLM generation is a token-level auto-regressive process. are performed as contextual bandits
where popular methods such as reinforcement learning from human feedback (RLHF)~\citep{ouyang2022training,stiennon2020learning} and direct alignment algorithms~\citep{rafailov2024direct,yuan2023rrhf,meng2024simpo} only introduce supervision signals at the end of each response. As deriving preference optimization as bandit problems can lead to sub-optimal solutions and unstable training processes~\citep{zhong2024dpo,zeng2024token}, many works propose to model LLM decoding as token-level Markov Decision Processes (MDP) and introduce step-wise supervision signals that quantify the value of each action, successfully applied in tasks such as instruction following~\citep{zhong2024dpo,yoon2024tlcr} and mathematical reasoning~\citep{xie2024monte,chen2024step,lai2024step}. 

\begin{figure}[htpb]
\centering
\includegraphics[width=7cm,height=4.67cm]{./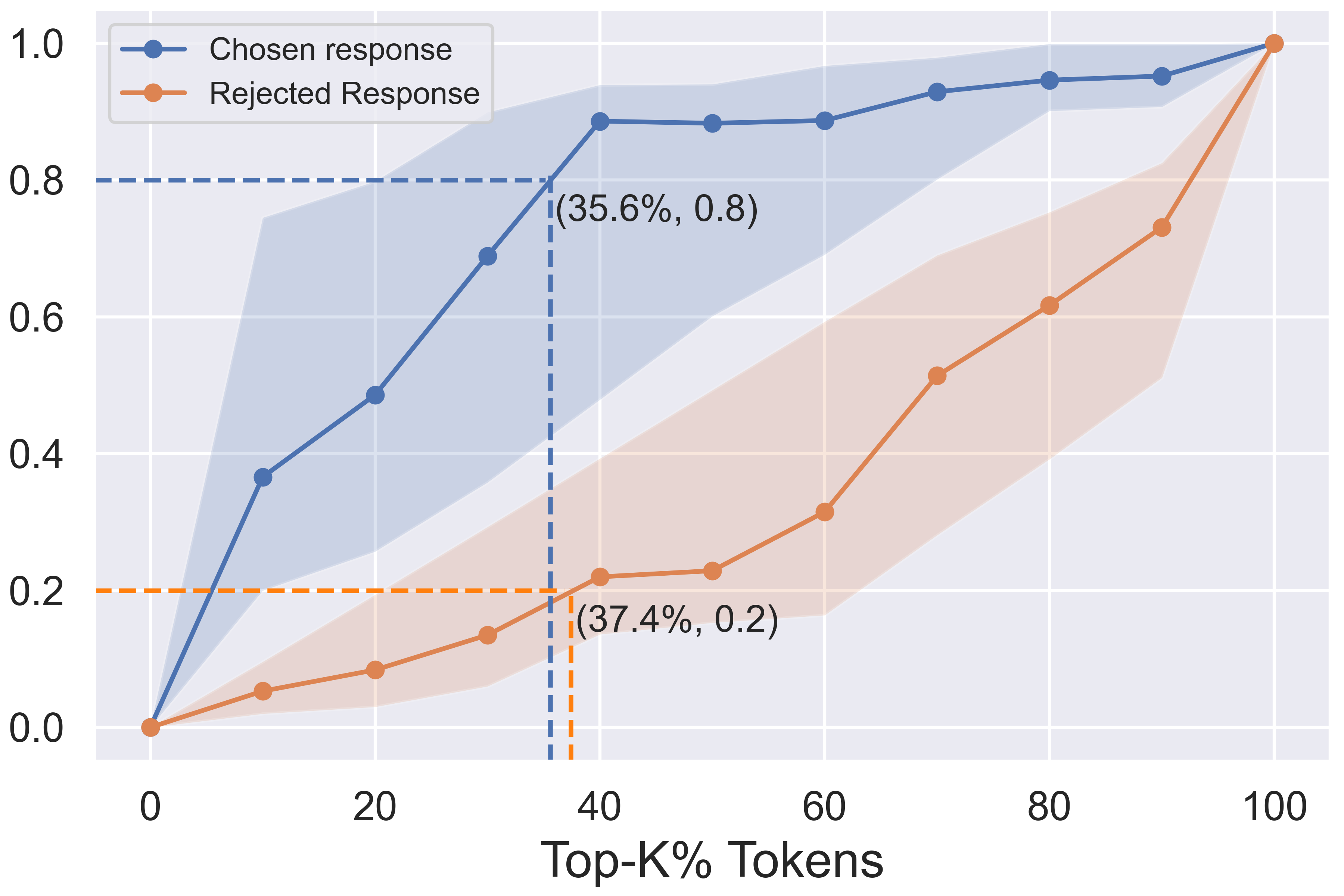}
\caption{Token-level reward accumulations. As tokens with high rewards are considered key tokens for chosen responses, their Top-K\% tokens are accumulated in descending order with the highest rewards. In contrast, rewards are accumulated in ascending orders for rejected responses. More details in Appendix \ref{appn:reward_accumulate}.}
\label{fig:reward_distribution}
\end{figure}

Though achieving outstanding performance, most of these methods are optimized on all available tokens from the training dataset. To validate the effectiveness of this setup, in Figure \ref{fig:reward_distribution}, we present the token-level reward accumulations for 1,000 samples from an instruction following dataset~\citep{cui2023ultrafeedback} 
% and a question answering (QA) dataset~\citep{wu2024fine}, 
where the token-level rewards are assigned by GPT-4~\citep{achiam2023gpt}. According to the results, a limited number of tokens with extreme reward values (key tokens) dominate the total rewards. The Top-35.6\% tokens occupy the highest 80\% rewards for chosen responses, while the lowest 37.4\% tokens only occupy 20\% rewards for rejected responses. 
% selecting the top-40\% key tokens can cover over 80\% of effective supervision signals. 
% In QA, selecting top-50\% key tokens can cover over 80\% of effective supervision signals. 
These observations prove that not all tokens are equally effective in preference alignment, and optimizing on all available tokens can be noisy and inefficient~\citep{lin2024rho,chen2024low}. Some works explored only optimizing on selected response fragments, but their selection strategies are complex and expensive, utilizing Monte-Carlo Tree Search~\citep{xie2024monte,chen2024step} or annotations from human/capable LLMs~\citep{lai2024step,yoon2024tlcr}. The above limitations underscore the prospect of selective training and more efficient token selection strategies in improving preference optimization algorithms.

Based on these intuitions, we propose Selective Preference Optimization (SePO), a novel selective alignment strategy that centers on efficient key token selection without requiring strong, fine-grained supervision signals. We show that Direct Preference Optimization (DPO)~\citep{rafailov2024direct} inherently learns a reward function that decouples the response-level reward values into token level~\citep{rafailov2024r}. Based on this conclusion, we propose the first DPO-based token selection method, which trains an oracle model on a moderate-scale subset of the target data distribution, aiming to parameterize an optimal token-level reward function. This token selection strategy has two key advantages: 1) \textbf{Flexibility}: the oracle modeling process is based on the original response-level preference annotations without requiring any extra supervision signals, making it directly applicable to any existing alignment datasets; 2) \textbf{Efficiency}: the cost for token selection can be easily reduced by controlling the oracle model size and the scale of its training subset, which enables cost-efficient selective alignment. We then utilize the estimated reward function to score all tokens within the large-scale target dataset, where tokens with the highest reward values in the chosen responses and tokens with the lowest reward values in the rejected responses are selected as key tokens for alignment. Finally, we design a reference model-free contrastive objective function to optimize the target policy model on the selected tokens. 

As SePO enables small oracle models to steer selective alignment for much larger policy models, we further explore it as a new paradigm for weak-to-strong generalization~\citep{burns2023weak}. Instead of leveraging weak models
to provide supervision, 1) we leverage weak oracle models to select tokens from in-distribution data for training strong policy models; 2) we propose to train oracle models on out-of-distribution data, which select key tokens to improve target policy model performance and alleviate over-optimization~\citep{gao2023scaling,rafailov2024scaling} on the weak supervision data.

% Comprehensive evaluations on three public benchmark datasets show that SePO significantly improves policy model performances and outperforms competitive baseline methods by only optimizing 30\% of the tokens trained on other methods. Further experiments on the impact of data scale show that training only on key tokens effectively improves alignment performance, and a well-trained oracle model generally leads to superior SePO performance. Finally, evaluations on weak-to-strong generalization show that weak oracle models perform effective supervision to strong policy models with up to 16.8$\times$ more parameters. SePO can also select key tokens from weak supervision data to enhance policy model performance, where previous full optimization methods cause degradation on these policy models.

In summary, our main contributions are: 

\begin{itemize}[leftmargin=*]
    \item We propose SePO, the first DPO-based selective training strategy for preference alignment, which applies to any alignment datasets with response-level supervision signals and enables cost-efficient token selection with small-scale oracle models and training data.
    % It achieves alignment efficiently by obtaining key tokens with an estimated token-level reward function and only optimizing on these selected key tokens;
    \item Explorations on weak-to-strong generalization show that weak oracle models effectively supervise strong policy models with up to 16.8$\times$ more parameters. SePO also selects useful tokens from weak data, alleviating the over-optimization problem on out-of-distribution data;
    \item We examine SePO on three public evaluation benchmarks. Experiments show that SePO significantly improves performances on six policy models and outperforms competitive baseline methods by only optimizing 30\% key tokens with up to 60\% reduction in GPU training hours.
\end{itemize}

\begin{figure*}[htpb]
\centering
\includegraphics[width=14cm,height=3.5cm]{./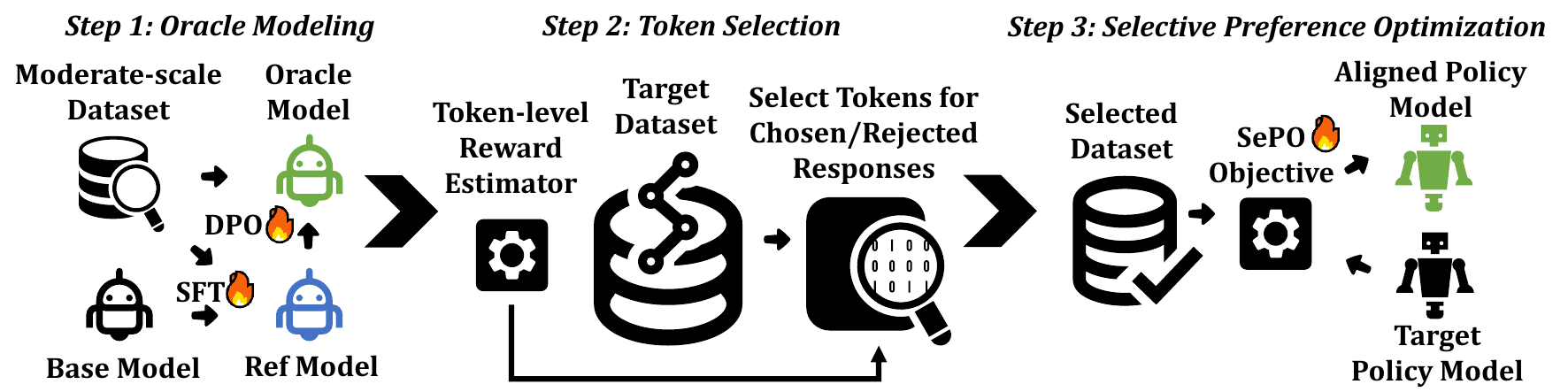}
\caption{SePO mainly consists of three steps: 1) Parameterize a token-level reward function by with a ref-oracle model pair; 2) Select key tokens within the target preference dataset; 3) Train the policy model on selected tokens.}
\label{fig:sepo_pipenline}
\end{figure*}

\section{Preliminary}
\paragraph{Alignment}
Preference alignment directly optimizes LLMs with human preferences by maximizing the reward values of model outputs, which are obtained via a response-level reward function $r(q, y)$. The reward function is defined under the Bradley-Terry~\citep{bradley1952rank} model of preferences. Specifically, for the same prompt $q$ and two completed responses $(y_1, y_2)$ under data distribution $\mathcal{D}$, the model assumes:

% \begin{small}
% \begin{equation}\label{eqn_bt}
%     P_{\mathcal{D}}(y_1\succ y_2|q) = \sigma(r(q,y_1)-r(q,y_2))
% \end{equation}
% \end{small}

\begin{small}
\begin{equation}\label{eqn_bt}
    P_{\mathcal{D}}(y_1\succ y_2|q) = \frac{exp(r(q,y_1))}{exp(r(q,y_1))+exp(r(q,y_2))}
\end{equation}
\end{small}

where $P_{\mathcal{D}}(y_1\succ y_2)$ denotes the probability that $y_1$ is preferred against $y_2$.
%In practice, the reward function is parameterized by a reward model, which is trained on a pair-wise human preference dataset. 
The alignment of language models is typically cast as a KL-constrained optimization problem on the reward values. Taking the following Lagrangian, the problem is transformed as:

\begin{small}
\begin{equation}\label{eqn_align}
\begin{aligned}
&\mathop{argmax}\limits_{\pi}\mathbb{E}_{q\sim\mathcal{D}, y\sim\pi(y|q)}\left[r(q, y)\right]\\
&-\beta \mathbb{D}_{KL}\left[\pi_\phi(y|q)\Vert\pi_{ref}(y|q)\right]
\end{aligned}
\end{equation}
\end{small}

% \begin{small}
% \[
% \begin{aligned}
% &\mathop{argmax}\limits_{\pi}\mathbb{E}_{q\sim\mathcal{D}, y\sim\pi(y|q)}\left[r(q, y)\right]\\
% &s.t.\ \mathbb{E}_{q\sim\mathcal{D}} \mathbb{D}_{KL}\left[\pi(y|q)\Vert\pi_{ref}(y|q)\right]\leq \sigma
% \end{aligned}
% \]
% \end{small}

where $\pi$ denotes the aligned policy model, $\pi_{ref}$ denotes the reference policy model.

% \begin{small}
% \begin{equation}\label{eqn_align}
% \begin{aligned}
% &\mathop{argmax}\limits_{\pi}\mathbb{E}_{q\sim\mathcal{D}, y\sim\pi(y|q)}\left[r(q, y)\right]\\
% &-\beta \mathbb{D}_{KL}\left[\pi_\phi(y|q)\Vert\pi_{ref}(y|q)\right]
% \end{aligned}
% \end{equation}
% \end{small}

\paragraph{Direct Preference Optimization}
\citet{ziebart2008maximum} have shown that Eqn \ref{eqn_align} has a closed-form optimal solution, which enables the reward function to be re-parameterized by the optimal policy:

\begin{small}
\begin{equation}\label{eqn:closeform_dpo}
        r(q, y) = \beta \mathop{log}\frac{\pi^*(y|q)}{\pi_{ref}(y|q)}+\beta \mathop{log}Z(q)
\end{equation}
\end{small}

where $\pi^*$ denotes the optimal policy, and $Z(q)$ is the partition function. DPO~\citep{rafailov2024direct} bypasses the reward modeling stage by directly substituting this closed-form solution into Eqn. \ref{eqn_bt}, which cancels out $Z(q)$ as it un-changes with the same $q$, yielding the following DPO objective:

\begin{small}
\begin{equation}\label{eqn_dpo}
\begin{aligned}
    &\mathcal{L}_{DPO} = -\mathbb{E}_{\left(q,  y_w, y_l \right) \sim \mathcal{D}}
    \log \sigma\left(\beta u(x, y_w, y_l) \right)\\
    &u(x, y_w, y_l)=\log\frac{\pi_\theta(y_w|x)}{\pi_{ref}(y_w|x)}-\log\frac{\pi_\theta(y_l|x)}{\pi_{ref}(y_l|x)}\\
\end{aligned}
\end{equation}
\end{small}

where $y_w$ and $y_l$ denote the preferred and dis-preferred responses to the prompt $q$.

\section{Methodology}
We show DPO as inherently learning the best estimate on a token-level distribution of the response-level reward values (Sec. \ref{sec:dpo_estimate}). Based on this conclusion, we propose SePO, which optimizes the target policy model with selected key tokens (Sec. \ref{sec:SePO}). We also explore SePO as a new paradigm for weak-to-strong generalization (Sec. \ref{sec:w2s}). The pipeline is shown in Figure \ref{fig:sepo_pipenline}.

\subsection{Token-level Reward Function Estimator}\label{sec:dpo_estimate}
% Due to the auto-regressive nature, 
LLM decoding can be naturally formulated as a token-level MDP, a tuple $(\mathcal{S}, \mathcal{A}, f, r(s_t, a_t), \rho)$, where $\mathcal{S}$ and $\mathcal{A}$ denote the state and action space. $s_t\in\mathcal{S}$ deontes the current state, consisting of all prompt tokens and current generated tokens (i.e. $s_t = \{q|y_0,...,y_t\}$, $|$ denotes concatenation). $s_T$ denotes the terminal state. $a_t\in\mathcal{A}$ denotes the current action, where $\mathcal{A}$ is the token vocabulary. $f$ is the deterministic state transition function.
% that concatenates the current state $s_t$ and action $a_t$. 
$\rho$ is an initial state distribution over prompts $q$.
% , where the initial state $s_1$ consists of the tokens from $q$. 
$r(s_t, a_t)$ denotes the token-level reward function.

% In this section, we show that DPO inherently learns the best estimate on a token-level distribution of the response-level reward values along the token-level MDP trajectory. 
We begin with the following mild assumption that is widely proven appropriate~\citep{rafailov2024r,zhong2024dpo,zeng2024token}:
\begin{assumption}\label{theorem:assum}
In any token-level MDPs, the reward function $r$ can be decoupled as a linear combination of reward values modeled by another token-level reward function $\hat{r}$ along the trajectory:

\begin{small}
\begin{equation}\label{eqn:assum}
    r(q, \tau)=\sum_{t=1}^T\hat{r}(s_t, a_t)
\end{equation}
\end{small}

with $s_t$, $a_t$ along the token-level MDP trajectory $\tau=\{s_1, a_1, s_2,...,s_T\}$.
\end{assumption}
With the above assumption, the Bradley-Terry model in Eqn. \ref{eqn_bt} can be replaced into token level, where the chosen and rejected trajectories are assumed to start and end at the same state.
% With the above assumption, the Bradley-Terry model in Eqn. \ref{eqn_bt} can be replaced into token level:
% \begin{small}
% \[
%     P_{\mathcal{D}}(\tau^w\succ \tau^l|q) = \sigma(\sum_{i=1}^N \hat{r}(s_i^w,a_i^w)-\sum_{j=1}^M \hat{r}(s_j^l,a_j^l))
% \]
% \end{small}
% where $\tau^w$ and $\tau^l$ are the chosen and rejected trajectories (responses), which are assumed to start and end at the same state $s_1$ and $s_T$.

\begin{theorem}\label{theorem:equiv} 
With a reference model $\pi_{ref}$, fitting any reward functions $r$ that are consistent to the Bradley-Terry model with the DPO algorithm leads to an optimal estimation of another reward function $\hat{r}$ that decouples the response-level reward values into the token level, which satisfies:

\begin{small}
\begin{equation}\label{eqn:theorem}
    \hat{r}(s_t, a_t)\propto \mathop{log}\frac{\pi^*(a_t|s_t)}{\pi_{ref}(a_t|s_t)}
\end{equation}
\end{small}

where $\pi^*$ denotes the oracle model obtained via DPO on the reference model.
\end{theorem}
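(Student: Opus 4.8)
The plan is to connect the response-level closed form of the reward in Eqn.~\ref{eqn:closeform_dpo} to the per-token conditionals of the autoregressive policy, and then match that expansion against the additive decomposition granted by Assumption~\ref{theorem:assum}. First I would exploit the fact that both $\pi^*$ and $\pi_{ref}$ factorize over the MDP trajectory: since $s_t = \{q|y_0,\dots,y_t\}$ and $a_t = y_t$, the chain rule gives $\pi(y|q) = \prod_{t=1}^T \pi(a_t|s_t)$ for either policy. Substituting this factorization into Eqn.~\ref{eqn:closeform_dpo} turns the log-ratio of sequence probabilities into a sum of log-ratios of token conditionals:
\begin{equation}
r(q, \tau) = \beta\sum_{t=1}^T \log\frac{\pi^*(a_t|s_t)}{\pi_{ref}(a_t|s_t)} + \beta\log Z(q).
\end{equation}

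Next I would compare this against the assumed decomposition $r(q,\tau)=\sum_{t=1}^T \hat r(s_t,a_t)$ from Eqn.~\ref{eqn:assum}. The two expressions must agree for every trajectory, which suggests the identification $\hat r(s_t, a_t) = \beta\log\frac{\pi^*(a_t|s_t)}{\pi_{ref}(a_t|s_t)}$, giving exactly the claimed proportionality with constant $\beta$. The term $\beta\log Z(q)$ depends only on the shared prompt $q$ (the partition function is constant in $a_t$), so it is a fixed offset attached to the initial state; because $\tau^w$ and $\tau^l$ start at the same $s_1$, this offset cancels in the token-level Bradley-Terry difference of Eqn.~\ref{eqn_tlbt}, leaving the induced preference probabilities unchanged and hence preserving consistency with the Bradley-Terry model.

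To make the per-token identification rigorous rather than merely one admissible reading, I would route the matching through the maximum-entropy, soft-optimal structure of the token-level MDP that underlies Eqn.~\ref{eqn:closeform_dpo}. The KL-regularized optimal policy satisfies $\beta\log\frac{\pi^*(a_t|s_t)}{\pi_{ref}(a_t|s_t)} = Q^*(s_t,a_t) - V^*(s_t)$, and since the transition $f$ is deterministic, the soft Bellman relation $Q^*(s_t,a_t)=\hat r(s_t,a_t)+V^*(s_{t+1})$ lets the value terms telescope along the trajectory, leaving $\sum_{t=1}^T \beta\log\frac{\pi^*(a_t|s_t)}{\pi_{ref}(a_t|s_t)} = \sum_{t=1}^T \hat r(s_t,a_t) - V^*(s_1)$ once the terminal value is fixed. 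Identifying $-V^*(s_1)$ with $\beta\log Z(q)$ recovers the response-level form and pins down $\hat r$ per token up to the prompt-level constant, yielding Eqn.~\ref{eqn:theorem}.

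I expect the main obstacle to be exactly this identifiability issue: the additive constraint of Assumption~\ref{theorem:assum} fixes only the \emph{sum} of the per-token rewards, not the individual summands, so a naive term-by-term reading of the factorized form is under-determined. Resolving it requires the telescoping Bellman argument above to single out the per-token log-ratio as the canonical decomposition, and care is needed at the boundary—the terminal value and the shared start and end states $s_1, s_T$ of $\tau^w$ and $\tau^l$—to ensure the partition-function offset is absorbed consistently and that the proportionality in Eqn.~\ref{eqn:theorem} holds uniformly in $t$.
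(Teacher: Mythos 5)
Your proof is correct, and its rigorous core is essentially the paper's own argument: the paper likewise derives $\hat r(s_t,a_t)=\beta\log\frac{\pi^*(a_t|s_t)}{\pi_{ref}(a_t|s_t)}+V^*(s_t)-V^*(s_{t+1})$ from the maximum-entropy optimal policy and the KL-regularized Bellman equation (Eqn.~\ref{eqn:advantage}), telescopes the value terms along the trajectory to obtain $r(q,\tau)=\sum_{t=1}^T\beta\log\frac{\pi^*(a_t|s_t)}{\pi_{ref}(a_t|s_t)}+V^*(s_1)$, and then observes that $V^*(s_1)$ is shared by both responses in a pair and therefore drops out of the Bradley--Terry comparison, which is what licenses reading the per-token log-ratio as the learned reward. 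What you do differently is the entry point: you first expand the sequence-level DPO closed form (Eqn.~\ref{eqn:closeform_dpo}) via the autoregressive chain rule into a sum of per-token log-ratios plus $\beta\log Z(q)$, a step the paper never takes, and you are more explicit than the paper that Assumption~\ref{theorem:assum} alone only constrains the \emph{sum} of per-token rewards, so the per-token identification must be routed through the soft-Bellman telescoping. That identifiability caveat is real---the decomposition is pinned down only up to a potential-based shaping term that cancels in pairwise preferences---and your handling of it is, if anything, more careful than the paper's. One small slip: matching your two expansions forces $V^*(s_1)=\beta\log Z(q)$, not $-V^*(s_1)=\beta\log Z(q)$; with that sign fixed, your route and the paper's agree exactly.
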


\begin{proof}[Proof Sketch]
This proof is heavily inspired by \citet{rafailov2024r}. Starting with the KL-regularized RL objective, the optimization process aims to maximize the expected cumulative reward. Under the maximum entropy RL setting, the optimal policy \( \pi^* \) is related to the Q-function and value function. The Bellman equation incorporates the KL term, relating the optimal Q-function \( Q^* \) to the token-level reward \( r(s_t, a_t) \).

By combining these relationships, the token-level reward can be expressed as:

\begin{small}
    \[ r(s_t, a_t) = \beta \log\frac{\pi^*(a_t|s_t)}{\pi_{ref}(a_t|s_t)} + V^*(s_t) - V^*(s_{t+1}), \]
\end{small}

where \( V^*(s_t) \) is the optimal value function.
Under Assumption \ref{theorem:assum}, summing the token-level rewards over all time steps yields the response-level reward. The term \( V^*(s_1) \) (the initial state's value) is constant with the same starting state, which does not affect preference comparisons. Therefore, the preference modeling depends only on the sum of the log-ratio terms. This shows that \( \pi^* \) inherently aligns an optimal token-level reward function:

\begin{small}
\[ \hat{r}(s_t, a_t) = \beta \log\frac{\pi^*(a_t|s_t)}{\pi_{ref}(a_t|s_t)}, \]
\end{small}

which indicates Eqn. \ref{eqn:theorem} and completes the proof. See Appendix \ref{proof_theorem1} for a detailed proof.
\end{proof}

This reward function marks the contribution of each action given the current state at the token level. In practice, the quality of the training data for DPO determines how well the calculated reward quantifies the token-level contribution.

\subsection{Selective Preference Optimization}\label{sec:SePO}
SePO is guided by the simple idea that "not all tokens are equally effective", which has been widely evaluated~\citep{lin2024rho,chen2024step,lai2024step}. We explore fully utilizing DPO to efficiently select the most useful tokens in modeling human preferences in LLM alignment. Firstly, we train an oracle model on a moderate-scale preference dataset with DPO, aiming to model a token-level reward function for the target data distribution. The reward function is then applied to large-scale data to score all the tokens. The policy model is only trained on selected tokens with highest scores in chosen responses and lowest scores in rejected responses, which are expected as key tokens in achieving alignment.

\paragraph{Oracle Modeling with DPO}
% We fit an oracle model with DPO utilizing a moderate-scale dataset via random sampling over the target preference dataset $\mathcal{D}$, which we expect to parameterize a pessimistic estimation of the target reward function.
We present the following Theorem to prove the viability of random sampling on the target preference dataset to reduce the cost of oracle model training:

\begin{theorem}\label{theorem:random_dataset} 
Let $\mathcal{D}$ be the target preference dataset with $N$ samples, and $\mathcal{S}$ be a random selection of $m$ samples from $\mathcal{D}$ (m$\leq N$), which is drawn independently and uniformly. 
The reward function $r_{\mathcal{S}}$ modeled by fitting $\mathcal{S}$ with DPO is a pessimistic estimation of the target reward function $r_{\mathcal{D}}$. The result can be formalized as:

\begin{small}
\begin{equation}\label{eq:theorem2}
    \mathbb{E}_{\mathcal{S}}(r_{\mathcal{S}}(q, y))\leq r_{\mathcal{D}}(q, y)
\end{equation}
\end{small}

where $q,y$ denote any query-response pairs drown from $\mathcal{D}$. The equality holds when $m=N$.
\end{theorem}

\begin{proof}[Proof Sketch]
As the reward functions are parameterized via fitting the DPO algorithm, we replace Eqn. \ref{eqn:closeform_dpo} into Eqn. \ref{eq:theorem2} and reduce this inequality to comparing the expected optimal policy functions:

\begin{small}
\[
\mathbb{E}_{\mathcal{S}}[ \log \pi_{\mathcal{S}}^*(y|q) ] \leq \log \pi_{\mathcal{D}}^*(y|q)
\]
\end{small}

Since $\mathcal{S}$ is a uniform random sample from $\mathcal{D}$, the empirical distribution $P_{\mathcal{S}}$ is an unbiased estimator of the true distribution $P_{\mathcal{D}}$; that is, $\mathbb{E}_{\mathcal{S}}[ P_{\mathcal{S}}(X) ] = P_{\mathcal{D}}(X)$. Therefore, training on $\mathcal{S}$ yields an unbiased estimate of the optimal policy: $\mathbb{E}_{\mathcal{S}}[ \pi_{\mathcal{S}}^*(y|q) ] = \pi_{\mathcal{D}}^*(y|q)$.

Applying Jensen's inequality for the concave logarithm function, we have:

\begin{small}
\[
\mathbb{E}_{\mathcal{S}}[ \log \pi_{\mathcal{S}}^*(y|q) ] \leq \log \mathbb{E}_{\mathcal{S}}[ \pi_{\mathcal{S}}^*(y|q) ] = \log \pi_{\mathcal{D}}^*(y|q)
\]
\end{small}

showing that the expected log-optimal policy from $\mathcal{S}$ is less than or equal to that from $\mathcal{D}$ and completes the proof. 

See Appendix \ref{proof_theorem2} for a detailed proof.
\end{proof}

Theorem \ref{theorem:random_dataset} shows that training on a random-sampled subset of the target dataset with DPO can pessimistically estimate the target token-level reward function, and the estimation becomes increasingly accurate with increased sample sizes. 
% It enables random sampling to reduce the cost of oracle model training.
With Theorem \ref{theorem:random_dataset}, we first perform SFT on a base model and the chosen responses of the moderate-scale dataset $\mathcal{S}$ to obtain the reference model $\pi_{ref}$:

\begin{small}
    \begin{equation}
        \mathcal{L}_{SFT} = -\mathbb{E}_{\left(q,y_w \right) \sim \mathcal{S}}
    \sum_i\log \pi_{ref}(y_w^i|q, y_w^{<i})
    \end{equation}
\end{small}

With the reference model, we further perform DPO on $\mathcal{S}$ with the objective function Eqn. \ref{eqn_dpo}
to obtain the oracle model $\pi_{ora}$. With Theorem \ref{theorem:equiv}, we can utilize $\pi_{ref}$ and the oracle model $\pi_{ora}$ to parameterize the optimal token-level reward function for human preferences, used for token selection.

\paragraph{Token Selection.}
We score all tokens within the target preference dataset with the estimated token-level reward function. Based on the token-level MDP and Theorem \ref{theorem:equiv}, for each prompt-response pair $(q, y)$, the score for token $y_i$ is calculated as follows:

\begin{small}
    \begin{equation}
        s(y_i)=\log \frac{\pi_{ora}(y_i|q,y_{<i})}{\pi_{ref}(y_i|q,y_{<i})}
    \end{equation}
\end{small}

% We define a token selection ratio $k$, which determines the selected proportion for each response. 
For chosen responses, we utilize the following indicator function for selection:

\begin{small}
    \begin{equation}\label{eqn:indicator}
    \mathbb{I}_k^w(y_i) =
    \begin{cases}
      1, & \text{if $s(y_i)$ ranks in highest $k\%$ in $y$} \\
      0, & \text{otherwise} 
    \end{cases}
\end{equation}
\end{small}

For rejected responses, we change the condition for indicating $1$ to "if $s(y_i)$ ranks in lowest $k\%$ in $y$" and mark the indicator function as $\mathbb{I}_k^l(y_i)$. The intuition behind this action~\citep{rafailov2024r,zhong2024dpo} is that key tokens for chosen responses are likely to contribute high token-level rewards, while key tokens for rejected responses are likely with low token-level rewards, whose probabilities are significantly suppressed in $\pi_{ora}$ compared to the reference model.

\paragraph{SePO Objective.}
We design a simple contrastive preference optimization objective~\citep{meng2024simpo} on the target policy model $\pi_{t}$ with the selected tokens. Specifically, the objective function $\mathcal{L}_{SePO}$ is designed as follows:

\begin{small}
\begin{equation}\label{eq:optim}
-\mathbb{E}_{\left(q,  y_w, y_l \right) \sim \mathcal{D}}
    \log \sigma\left(\hat{u}(q, y_w, \mathbb{I}^w_{k_w})-
    \hat{u}(q, y_l, \mathbb{I}^l_{k_l}) - \lambda \right)
\end{equation}
\end{small}
where
\begin{small}
\[
\hat{u}(q, y, \mathbb{I}_k)= \frac{\gamma}{|y|\cdot k\%}\sum_{i=1}^{|y|}\mathbb{I}_k(y_i)\log\pi_{\theta}(y_i|q, y_{<i})
\]
\end{small}
% \begin{small}
% \[
% \begin{aligned}
%     &\hat{u}_w(q, y, k)= \frac{\gamma}{|y|\cdot k\%}\sum_{i=1}^{|y|}\mathbf{I}_k^w(s(y_i))\log\pi_{\theta}(y_i|q, y_{<i})\\
%     &\hat{u}_l(q, y, k)= \frac{\gamma}{|y|\cdot k\%}\sum_{i=1}^{|y|}\mathbf{I}_k^l(s(y_i))\log\pi_{\theta}(y_i|q, y_{<i})\\
% \end{aligned}
% \]
% \end{small}
$\gamma$, $\lambda$ are hyper-parameters, $k_w$, $k_l$ denote the token selection ratios for chosen/rejected responses, and $\hat{u}$ calculates the length-normalized log-likelihoods for selected tokens. This objective enables direct alignment of only crucial tokens, which avoids noise and prevents bias towards over-length responses~\citep{meng2024simpo,yuan2023rrhf}.
% This design is expected to improve the efficiency and stability of the preference optimization process. The objective is also length-normalized and reference model-free, which prevents bias towards over-length responses and enables memory-efficient alignment.

% \begin{figure*}[htpb]
% \centering
% \includegraphics[width=14cm,height=3.733cm]{./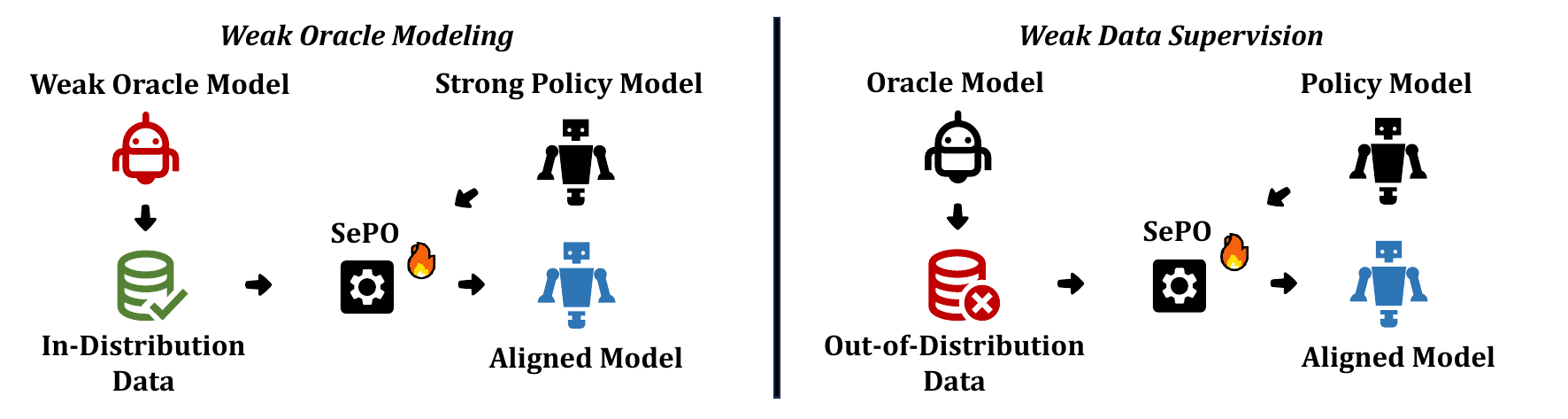}
% \caption{Application of SePO on weak-to-strong generalization. Left: SePO utilizes the weak oracle model to steer the strong policy model; Right: Useful supervision signals in out-of-distribution data are selected by the oracle model to enhance alignment on the policy model.}
% \label{fig:weak-to-strong}
% \end{figure*}

\subsection{SePO for Weak-to-Strong Generalization}\label{sec:w2s}
A unique advantage of SePO is that its cost can be easily reduced by controlling the base model size, using small oracle models to steer much stronger policy models. We further explore SePO as a new paradigm for weak-to-strong generalization~\citep{burns2023weak}, which aims to elicit strong student models with weak supervision signals. 
% Instead of directly leveraging weak models to provide supervision signals, 
We propose the following two novel methods (More details in Appendix \ref{appn:weak2strong}):

\paragraph{Weak Oracle Modeling.}
We propose to leverage weak oracle models to select key tokens from in-distribution data. Our intuition is that weak supervisors (oracle models) only identify which tokens are most effective in alignment, rather than directly providing supervisions, which normally requires stronger capabilities than student models.

\paragraph{Weak Data Supervision.}
% As the policy model becomes stronger, continual full optimization on the original data distribution can lead to over-optimization on the reward function~\citep{gao2023scaling,rafailov2024scaling}, which can seriously degrade policy model performance. Online preference optimization~\citep{xiong2024iterative,xie2024monte} alleviates over-optimization with online annotations of new in-distribution data, but can be costly for strong policy models.
When only weak out-of-distribution data is available, we propose to leverage SePO to select key tokens from the weak dataset, and only the selected tokens are utilized to supervise the strong policy model. Oracle models trained on high-quality data are used to directly perform key token selection on OOD data.
% Instead of full optimization on the training data, 
We expect selective optimization to prevent over-optimization on the out-of-distribution data, while still leveraging effective supervision signals to further improve the policy model.

\begin{table*}[!hbt]
\resizebox{\textwidth}{!}{
\begin{tabular}{llc|c|cc|c||lc|c|cc|c}
\toprule
& && \multicolumn{1}{c}{\textbf{Arena-Hard}} & \multicolumn{2}{c}{\textbf{AlpacaEval 2.0}} & \multicolumn{1}{c}{\textbf{MT-Bench}}&&& \multicolumn{1}{c}{\textbf{Arena-Hard}} & \multicolumn{2}{c}{\textbf{AlpacaEval 2.0}} & \multicolumn{1}{c}{\textbf{MT-Bench}}\\
\textbf{Methods} & \textit{Policy Model} & \textbf{GPU Hours} & \textbf{Win Rate} & \textbf{LC Win Rate} & \textbf{Win Rate} & \textbf{GPT-4o}&\textit{Policy Model} & \textbf{GPU Hours} & \textbf{Win Rate} & \textbf{LC Win Rate} & \textbf{Win Rate} & \textbf{GPT-4o}\\
\midrule
Base & \multirow{8}{*}{\parbox{1.cm}{\textbf{Pythia-2.8B}}} & -- & 2.34\% & 3.8\% & 4.12\% & 2.8 & \multirow{8}{*}{\parbox{1.cm}{\textbf{LLaMA2-Chat-7B}}} & -- & 4.6\% & 5.4\% & 5.0\% & 4.48 \\
+DPO & & 79.45 & 5.71\% & 5.72\% & 6.1\% & 3.16 & & 182.95 & 8.5\% & 7.8\% & 6.71\% & 5.43\\
+IPO & & 75.9 & 5.6\% & 4.8\% & 4.96\% & 3.12 & & 170.3 & 8.12\% & 8.78\% & 9.4\% & 5.64\\
+RRHF & & 52.19 & 4.37\% & 4.33\% & 4.47\% & 2.93 & & 123.67 & 9.4\% & 13.35\% & 14.41\% & 5.35\\
+SimPO & & 49.31 & 5.2\% & 5.8\% & 6.0\% & 3.3 & & 119.84 & 9.59\% & 13.58\% & \textbf{15.4\%} & 5.63\\
+TDPO & & 83.1 & 6.2\% & 6.58\% & 6.8\% & 3.26 & & 204.6 & 9.23\% & 10.86\% & 10.7\% & 5.55\\
+SparsePO & & 49.3 & 4.62\% & 6.17\% & 6.35\% & 3.0 & & 85.0 & 9.76\% & 13.58\% & 12.8\% & 5.43\\
+SePO-rand & & 24.92 & 3.07\% & 4.26\% & 4.4\% & 2.86 & & 68.3 & 6.73\% & 6.38\% & 6.47\% & 5.23\\
+SePO (Ours) & & 29.84 & \textbf{6.3\%} & \textbf{7.1\%} & \textbf{7.32\%} & \textbf{3.45} & & 71.97 & \textbf{10.3\%} & \textbf{14.4\%} & 14.91\% & \textbf{6.38}\\
\midrule
Base & \multirow{8}{*}{\parbox{1.cm}{\textbf{Pythia-6.9B}}}& -- & 4.23\% & 5.0\% & 5.17\% & 3.58 & \multirow{8}{*}{\parbox{1.cm}{\textbf{LLaMA2-Chat-13B}}} & -- & 12.0\% & 8.4\% & 7.7\% & 5.7 \\
+DPO & & 190.9 & 10.2\% & 12.78\% & 13.27\% & 4.7 & & 319.67 & 13.48\% & 13.72\% & 13.37\% & 5.84\\
+IPO & & 184.74 & 8.1\% & 11.78\% & 12.6\% & 4.34 & & 336.33 & 13.95\% & 14.27\% & 14.4\% & 5.76\\
+RRHF & & 141.27 & 7.47\% & 11.42\% & 13.2\% & 4.31 & & 269.61 & 13.84\% & 15.94\% & 16.36\% & 5.73\\
+SimPO & & 145.26 & 8.0\% & 11.8\% & 12.72\% & 4.51 & & 270.35 & 14.7\% & 16.4\% & 17.02\% & 5.7\\
+TDPO && 209.93 & 10.68\% & 13.92\% & \textbf{13.7\%} & 4.78 & & 297.43 & 14.4\% & 15.0\% & 15.65\% & 6.37\\
+SparsePO & & 96.43 & 8.6\% & 12.45\% & 11.9\% & 4.52 & & 161.0 & 13.7\% & 15.6\% & 15.32\% & 5.84\\
+SePO-rand && 77.35 & 4.82\% & 5.28\% & 5.46\% & 3.45 & & 140.57 & 10.05\% & 8.16\% & 7.5\% & 5.66\\
+SePO (Ours) && 79.07 & \textbf{10.94\%} & \textbf{14.27\%} & 13.6\% & \textbf{5.09} & & 151.05 & \textbf{15.5\%} & \textbf{17.53\%} & \textbf{18.41\%} & \textbf{6.86}\\
\midrule
Base & \multirow{8}{*}{\parbox{1.cm}{\textbf{LLaMA3-Base-8B}}}& -- & 3.3\% & 6.2\% & 4.6\% & 5.4 & \multirow{8}{*}{\parbox{1.cm}{\textbf{LLaMA3-Instruct-8B}}} & -- & 20.60\% & 22.9\% & 22.6\% & 6.5 \\
+DPO && 210.0 & 15.9\% & 18.2\% & 15.5\% & 6.36 & & 201.0 & 32.6\% & 40.3\% & 37.9\% & 6.94\\
+IPO && 215.73 & 17.8\% & 14.4\% & 14.2\% & 6.7 & & 212.46 & 30.5\% & 35.6\% & 35.6\% & 6.94\\
+RRHF && 148.8 & 6.3\% & 12.1\% & 10.1\% & 5.8 & & 162.44 & 26.5\% & 34.4\% & 36.0\% & 6.7\\
+SimPO && 128.1 & 23.4\% & 22.0\% & 20.3\% & 6.76 & & 136.9 & 33.8\% & 50.0\% & \textbf{48.8\%} & \textbf{7.03}\\
+TDPO && 245.8 & 24.7\% & 23.9\% & 22.0\% & 5.94 & & 245.52 & 29.7\% & 45.0\% & 37.6\% & 6.53\\
+SparsePO & & 94.0 & 20.6\% & 21.78\% & 21.02\% & 6.4 & & 96.4 & 31.1\% & 44.34\% & 41.0\% & 6.7\\
+SePO-rand && 83.9 & 6.2\% & 10.6\% & 8.3\% & 5.6 & & 89.23 & 25.5\% & 26.74\% & 25.5\% & 6.2\\
+SePO (Ours) && 84.5 & \textbf{25.0\%} & \textbf{25.6\%} & \textbf{22.78\%} & \textbf{6.8} & & 84.36 & \textbf{34.35\%} & \textbf{50.5\%} & 47.04\% & 6.94\\
\bottomrule
\end{tabular}}
\caption{Main results. For SePO, the oracle models are fully trained TinyLLaMA-1.1B and Pythia-1B, and used to select the top-30\% tokens. "GPU Hours" records the GPU running time of each method on different policy models.}
\label{tab:main-results}
\end{table*}

\section{Experiments}
This section introduces key experimental settings.

\subsection{Experimental Settings}
\paragraph{Models and Training Data.}
% We comprehensively evaluate SePO on two representative model families: LLaMA~\citep{touvron2023llama} and Pythia~\citep{biderman2023pythia}. 
To approximate the optimal token-level reward function, we first obtain the reference models by training on \href{https://huggingface.co/datasets/HuggingFaceH4/ultrachat_200k}{UltraChat-200K}~\citep{ding2023enhancing} in an SFT manner. 
% For the LLaMA model family, 
We train reference models on the TinyLLaMA-1.1B~\citep{zhang2024tinyllama} and Pythia-(70M, 160M, 410M, 1B, 1.4B)~\citep{biderman2023pythia} to facilitate research on the effect of oracle models with different sizes. For each reference model, we obtain the oracle models by further fine-tuning with DPO on \href{https://huggingface.co/datasets/HuggingFaceH4/ultrafeedback_binarized}{UltraFeedback}~\citep{cui2023ultrafeedback}. 
% We examine the selective preference training process on target policy models with various capabilities. 
To obtain target policy models, we perform SFT on 3 foundation models: Pythia-(2.8B, 6.9B) and LLaMA3-Base-8B~\citep{dubey2024llama}, with UltraChat-200K. We also test SePO on well-aligned models LLaMA2-Chat-(7B,13B)~\citep{touvron2023llama}, and LLaMA3-Instruct-8B.

\paragraph{Baseline Methods.}
We compare SePO with state-of-the-art offline preference optimization methods: DPO~\citep{rafailov2024direct}, IPO~\citep{azar2024general}, RRHF~\citep{yuan2024rrhf} and SimPO~\citep{meng2024simpo}, and token-level alignment method TDPO~\citep{zeng2024token} and SparsePO~\citep{christopoulou2024sparsepo} (learnable sparse mask). To evaluate the SePO token selection, we further include a self-implemented SePO-rand baseline that randomly selects k\% tokens from the pair-wise data and optimizes via Eqn. \ref{eq:optim}.

\paragraph{Evaluation Benchmarks.}
We quantify the results by evaluating on three widely used instruction-following benchmarks: AlpacaEval 2.0~\citep{dubois2024length}, MT-Bench~\citep{zheng2024judging}, and Arena-Hard~\citep{li2024crowdsourceddatahighqualitybenchmarks}. \textbf{All judgments are performed by the latest GPT-4o model}.  More details in Appendix \ref{appn:exp_set}.

\subsection{Overall Performance}
Performances of SePO and other baseline methods on three benchmark datasets are presented in Table \ref{tab:main-results}. According to the results, SePO significantly improves performance over the base policy models, with an average of 9.05\% improvement in win rates on Arena-Hard. SePO also outperforms other strong preference optimization methods. On MT-Bench, SePO achieves the best average scores among other methods on five of six policy models, surpassing both state-of-the-art response-level methods such as SimPO and token-level method TDPO and SparsePO. Notably, \textbf{SePO models are only optimized on 30\% of the tokens trained on other methods}, leading to \textbf{39.94\%-62.34\% reduction in GPU training hours} compared to baseline methods. Further discussions on \textbf{how SePO saves GPU training hours} and \textbf{the influence of oracle model training on SePO efficiency} are shown in Appendix \ref{appn:gpu_hours}. These results directly strengthen the effectiveness of selective training strategies applied in preference optimization. Further comparisons with SePO-rand show that optimizing on randomly selected k\% tokens significantly damages the performance of selective training, proving the effectiveness of our DPO-based token selection strategy in filtering the crucial supervision signals from the training data.

On AlpacaEval 2.0, SePO continues to achieve superior performance over baseline methods in both win rates and LC win rates. 
% further proving its effectiveness on different benchmarks. 
Notably, SePO outperforms all other methods on length-controlled win rates, including SimPO and RRHF, which are specifically designed for length-normalized reward formulation. These results show that selective training strategies also enhance policy models in avoiding over-length responses. We believe the reason is that during token selection, the token-level reward function can assign the end-of-sentence tokens with low-ranking scores, which can be discarded during optimization if the response ends inappropriately (e.g. over-length or repetition). In contrast, though SimPO and RRHF design length-normalized 
rewards, the end-of-sentence tokens are still included and fitted for all training samples.

% On MT-Bench, SePO outperforms all other methods on average scores. Due to the widely discussed poor separability of overall scores for MT-Bench, we look into category-based evaluations that provide fine-grained assessments. As shown, SePO achieves the best performances on 70\% of comparisons on Assistant and QA, indicating its significant improvement on subjective tasks that require high-level intention understanding and writing skills. However, SePO outperforms baseline methods in math and coding on only 40\% of the comparisons, underperforming baseline methods such as IPO and SimPO on several policy models. A possible reason is that objective tasks such as math and coding require coherent logic along the token-level MDP for response generation~\citep{xie2024monte,chen2024step,lai2024step}, while SePO is only optimized on selected tokens, which brings discontinuity in learning the logic during training. Baseline methods that optimize all tokens enable policy models to learn the full chain of reasoning and show advantages in objective scenarios.

% \begin{figure*}[htpb]
% \centering
% \includegraphics[width=14cm,height=3.5cm]{./}
% \caption{SePO with different combinations of K\% selection ratios for chosen/rejected responses, quantified by the LC win rates on AlpacaEval 2.0. All results are obtained with a fully trained TinyLLaMA oracle model.}
% \label{fig:topk_surface}
% \end{figure*}

\subsection{Impact of Data Scale}
This section evaluates how training data scales of SePO and oracle modeling impact policy model performance. We focus on two research questions:

\paragraph{How do token selection rates influence SePO performance?}
We investigate the influences of token selection rates on SePO performances by introducing various combinations for chosen and rejected responses. The ratio for chosen/rejected responses is each selected from $\{0.1, 0.3, 0.5, 0.7, 0.9\}$ and matched pair-wise, with 25 combinations in total. The experimental results are presented in Figure \ref{fig:topk_surface}.

\begin{figure}[htpb]
\centering
\includegraphics[width=7.5cm,height=3.75cm]{./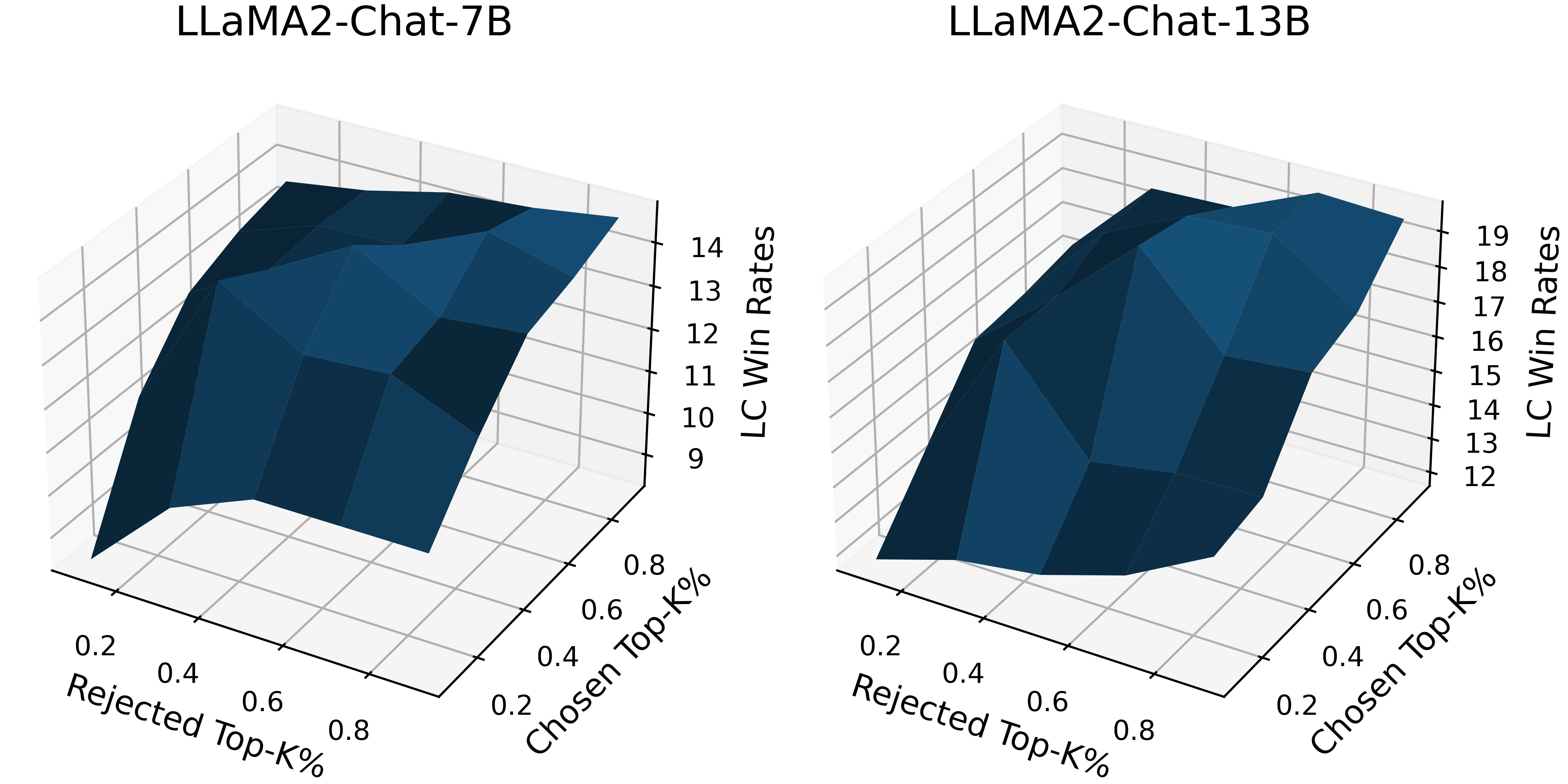}
\caption{SePO with different combinations of K\% selection ratios for chosen/rejected responses, quantified by the LC win rates on AlpacaEval 2.0.} 
% All results are obtained with a fully trained TinyLLaMA oracle model.
\label{fig:topk_surface}
\end{figure}

According to the results, increasing selection rates from 0.1 for chosen/rejected responses rapidly improves policy model performance, but the momentum decreases as the ratio continues to grow. For example, the LC win rate of LLaMA2-Chat-7B improves from 8.37\% to 14.8\% as the ratios for chosen/rejected responses rise from 0.1 to 0.5 progressively, then stabilizes around 14.7\% with higher selection rates. These observations prove our hypothesis that not all tokens are equally effective for LLM alignment. Training only on key tokens effectively improves alignment performance, while other tokens provide limited supervision information. Training on 
% Top-30\% tokens for TinyLLaMA or 
Top-50\% tokens for LLaMA2-Chat-(7B, 13B) provides comparable performance to aligning on all tokens.

% Increasing selection rates for chosen responses generally outperforms increasing ratios for rejected responses. For example, with a fixed rejected selection rate, the TinyLLaMA-Chat performance smoothly improves as the chosen ratios grow. However, improving rejected ratios from 0.1 to 0.9 leads to decreased model performance in 4 out of 5 fixed chosen selection rates. Similar results can be observed in the other two policy models. These results indicate that compared to increasing probabilities for irrelevant tokens in chosen responses, suppressing probabilities for high-reward tokens in rejected responses can significantly affect the model performance.

\begin{figure}[htpb]
\centering
\includegraphics[width=7.5cm,height=4.5cm]{./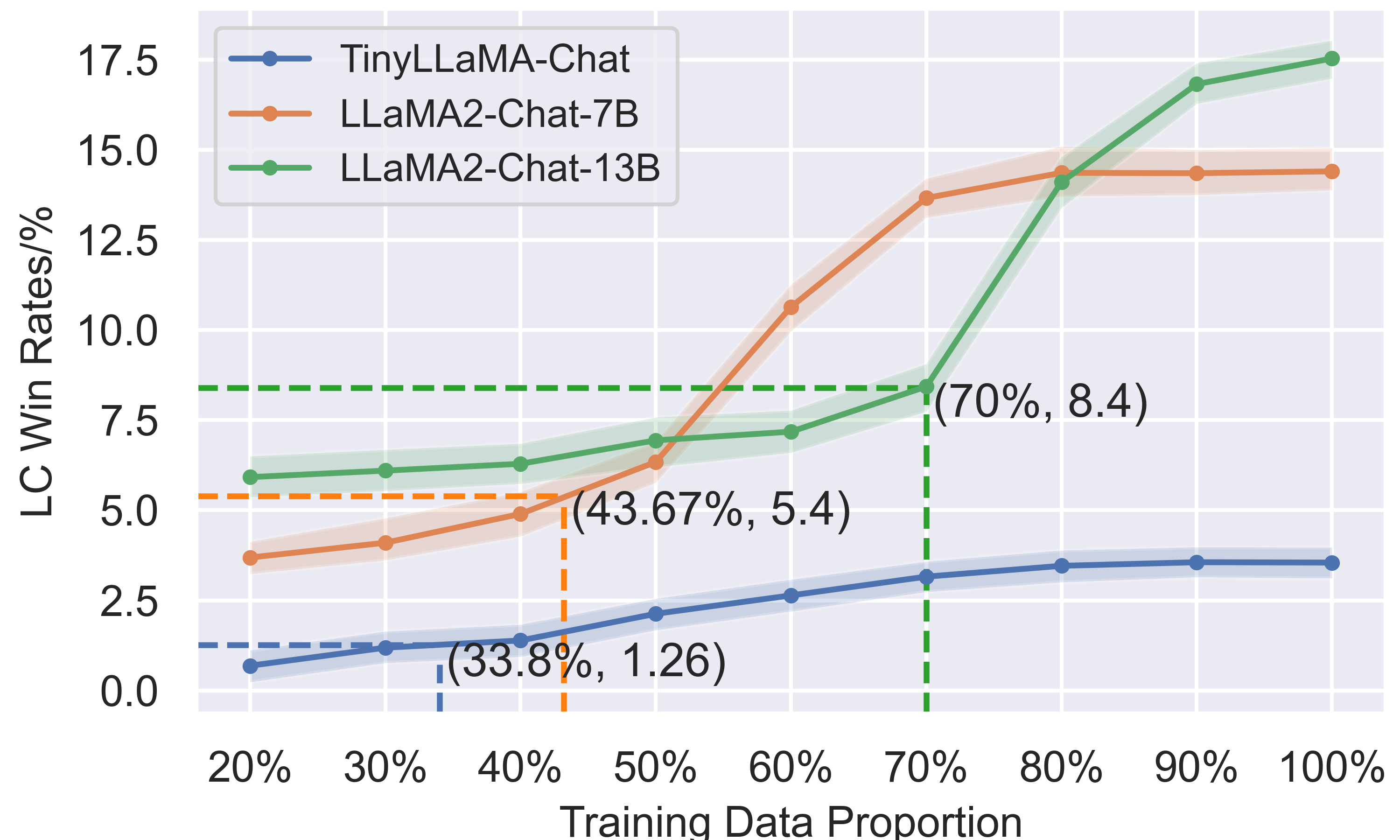}
\caption{LC win rates on AlpacaEval 2.0, supervised by oracle models trained with different data proportions. We report the average performance of 3 random runs.}
\label{fig:oracle_data}
\end{figure}

\begin{figure*}[htpb]
\centering
\includegraphics[width=14cm,height=2.8cm]{./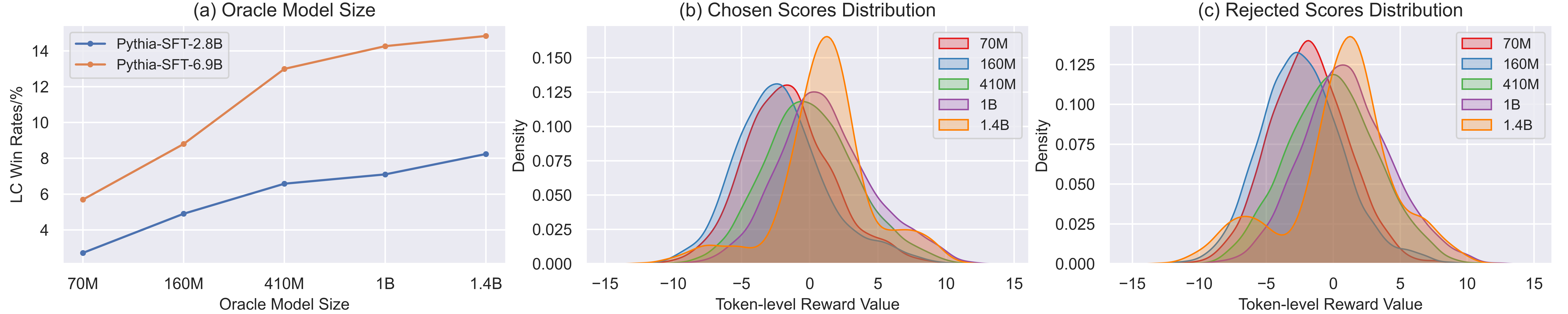}
\caption{(a) LC win rates on AlpacaEval 2.0, trained with oracle models of various sizes; (b)(c) token-level reward distributions for 5,000 chosen/rejected responses obtained from oracle models with different sizes.}
\label{fig:weak-to-strong-experiment}
\end{figure*}

\paragraph{How much data trains a good oracle model?}
In Theorem \ref{theorem:random_dataset}, we proved that training an oracle model on a moderate-scale subset is a pessimistic estimation of the target reward function. In this section, we empirically investigate the influence of the training data scale ($\frac{m}{N}$ in Theorem \ref{theorem:random_dataset}) for oracle models. Specifically, we randomly sample different proportions of data from UltraFeedback as training data for the TinyLLaMA-based oracle model. The results are shown in Figure \ref{fig:oracle_data}.

According to the results, training the oracle model on higher proportions of the target data generally leads to superior model performance. LC win rates on all policy models improve as the estimated token-level reward function becomes more accurate. Training on high data proportions also retains the majority of token selection capabilities. For example, supervising LLaMA2-Chat-7B policy model with an oracle model trained on 70\% of the data still achieves 13.66\% of LC win rates, which outperforms strong baseline methods such as SimPO and RRHF. However, the continual decrease in training proportions can significantly harm model performance. For the TinyLLaMA-Chat policy model, an oracle model trained with less than 40\% of target data leads to LC win rates of less than 1.26\%, which even underperforms the original policy model. For LLaMA2-Chat-(7B,13B), this threshold increases to 50\% and 70\%. These results indicate the importance of accurate estimation of the reward function, where false selection of key tokens degrades the capability of the policy model. These thresholds also increase with the size of policy models, showing that the high quality of key tokens becomes more important in supervising models with strong capabilities.

\subsection{Weak-to-Strong Generalization}
In this section, we empirically evaluate SePO on enhancing weak-to-strong generalization.

\paragraph{Weak Oracle Modeling.}
In Table \ref{tab:main-results}, we presented the performance of weak oracle models on guiding stronger policy models (e.g. LLaMA2-Chat-13B). The competitive results of SePO prove the viability of weak oracle modeling. 
To provide a clear landscape, we further train oracle models with Pythia-(70M, 160M, 410M, 1B, 1.4B) on the same target data and compare their performances on the Pythia-SFT-(2.8B, 6.9B) policy models. The results are shown in Figure \ref{fig:weak-to-strong-experiment} (a).

According to the results, oracle models with weak capabilities can provide effective supervision to strong policy models. For example, training with the Pythia-410M oracle model achieves 6.58\% on Pythia-SFT-2.8B and 13\% on Pythia-SFT-6.9B policy models, with up to 16.8$\times$ more parameters than the oracle model. These results outperform full optimization on the target dataset with baseline methods such as DPO and SimPO. In addition, the performance of target policy models continually improves as the oracle model size increases. For example, on Pythia-SFT-6.9B policy model, the 1.4B oracle model outperforms the 410M orcale model by 1.84\% and the 70M model by 9.15\%. These results show that oracle models with stronger capabilities can better model the token-level reward function and accurately select key tokens.

To provide an intuitive view, we present the token-level score distributions of different oracle models in Figure \ref{fig:weak-to-strong-experiment} (b)(c). For both chosen/rejected scores distribution, strong oracle models such as Pythia-(1B,1.4B) show higher densities in extreme (large and small) reward values, which facilitates separating key tokens from the other tokens. In contrast, small oracle models tend to fit a Gaussian distribution, where most tokens have similar scores. These results show that strong oracle models excel in distinguishing key tokens within the dataset,
% We observe similar results on rejected scores distribution, 
which further proves the capability of the oracle models as crucial in accurately modeling the token-level reward function. 

\paragraph{Weak Data Supervision.}
We evaluate the weak data supervision performance of SePO by training on \href{https://huggingface.co/datasets/Anthropic/hh-rlhf}{HH-RLHF}~\citep{bai2022training}, an early-released preference dataset with relatively lower quality~\citep{yang2024metaaligner} on responses. We perform SePO with a TinyLLaMA-based oracle model and 30\% token selection rates, and comparisons with baseline methods are shown in Table \ref{tab:weak-data}.
According to the results, SePO is the only method that improves the strong LLaMA2-Chat-13B policy model with data from HH-RLHF, outperforming base performance by 1.63\% on Arena-Hard and 0.41\% on AlpacaEval 2.0. With full optimization, baseline methods such as DPO and SimPO continuously degrade model performance due to over-optimization on weak supervision data. These results prove SePO effective in leveraging useful supervision signals from weak data while avoiding over-fitting harmful patterns. These results point to SePO as a highly efficient method for continually improving strong model performance with large-scale out-of-distribution data.

\begin{table}[!hbt]
\resizebox{.47\textwidth}{!}{
\begin{tabular}{l|c|cc}
\toprule
& \multicolumn{1}{c}{\textbf{Arena-Hard}} & \multicolumn{2}{c}{\textbf{AlpacaEval 2.0}}\\
\textbf{Methods} & \textbf{Win Rate} & \textbf{LC Win Rate} & \textbf{Win Rate}\\
\midrule
Base & 12.0\% & 8.4\% & 7.7\%\\
+DPO & 10.63\% & 7.42\% & 7.18\%\\
+IPO & 9.5\% & 6.5\% & 5.98\%\\
+RRHF & 11.7\% & 7.82\% & 7.4\%\\
+SimPO & 11.39\% & 7.5\% & 7.35\%\\
+SePO & \textbf{13.63\%} & \textbf{8.81\%} & \textbf{8.4\%}\\
\bottomrule
\end{tabular}}
% \caption{Performance of SePO and other baseline methods on generalizing the weak HH-RLHF dataset to the strong LLaMA2-Chat-13B policy model.}
\caption{Weak data supervision performance.}
\label{tab:weak-data}
\end{table}

\subsection{Scaling Supervision for SePO}
To evaluate the boundary of SePO capability, we further scale the size of the oracle model to provide strong supervision beyond target model capabilities. We refer to the results in Figure \ref{fig:weak-to-strong-experiment} (a) for experimental results from the oracle model size 70M-1.4B, measured in performance on AlpacaEval 2.0. The performance shows a scaling trend as the oracle model size increases. In Table \ref{tab:strong-to-weak}, we further extend the oracle model size to 12B on supervising Pythia-(2.8B,6.9B) policy models on the
same target data.

\begin{table}[!hbt]
\resizebox{.47\textwidth}{!}{
\begin{tabular}{l|cc}
\toprule
\textbf{Oracle Model Size} & \textbf{Pythia-2.8B} & \textbf{Pythia-6.9B}\\
\midrule
\textbf{1B} & 7.1\% & 14.27\%\\
\textbf{1.4B} & 8.07\% & 15.21\%\\
\textbf{2.8B} & 9.4\% & 16.24\%\\
\textbf{6.9B} & 9.85\% & 16.8\%\\
\textbf{12B} & \textbf{10.1\%} & \textbf{17.14\%}\\
\bottomrule
\end{tabular}}
\caption{LC win rates on AlpacaEval 2.0, trained with scaling oracle model sizes from 1B to 12B.}
\label{tab:strong-to-weak}
\end{table}

According to the results, the policy models’ performance continually improves as the oracle model size increases, with about 3\% improvement on both policy models from a 1B oracle model to a 12B oracle model. These results show that the performance of SePO can further improve along with oracle models that better estimate the token selection function. However, we also observe an overall trend in decreasing momentum of the improvement as the oracle model size increases, indicating a potential information bottleneck for the selective training paradigm. These results show that the use of stronger oracle models can significantly increase SePO costs while providing a limited performance improvement.

% \subsection{Related Work}
% Related works fall into three parts: response-level preference optimization~\citep{ouyang2022training,stiennon2020learning,rafailov2024direct}, token-level optimization~\citep{rafailov2024r,zhong2024dpo,zeng2024token,christopoulou2024sparsepo}, and weak-to-strong generalization~\citep{burns2023weak,lang2024theoretical,yang2024super}. Full related work in Appendix \ref{appn:related_work}.

\subsection{Related Work}
Previous works related to SePO can be divided into three parts: response-level preference optimization~\citep{ouyang2022training,stiennon2020learning,rafailov2024direct,yuan2023rrhf,meng2024simpo,ethayarajh2024kto,lu2024eliminating,azar2024general}, token-level optimization~\citep{rafailov2024r,zhong2024dpo,zeng2024token,yoon2024tlcr,chen2024low,chan2024dense,lai2024step,chen2024step}, and weak-to-strong generalization~\citep{burns2023weak,lang2024theoretical,yang2024super,charikar2024quantifying,zhou2024weak,ji2024aligner,zheng2024weak}. We provide a detailed description of related work in Appendix \ref{appn:related_work}.

\section{Conclusion}
This paper proposes SePO, an effective selective training strategy for LLM alignment. SePO estimates a token-level reward function via DPO and uses it to select key tokens from the target dataset. 
The target policy model is optimized only on the selected tokens in a contrastive manner. 
Experimental results show that SePO generally outperforms strong baseline methods by optimizing on selected tokens and shows strong promises in weak-to-strong generalization.
We also explore SePO in weak-to-strong generalization, where weak oracle models are proven to effectively supervise strong policy models and select useful supervision signals from out-of-distribution data. 
% Limitations of this work include the difficulty in adjusting vocabularies for the token selection algorithm and insufficient exploration in the scalability of selective alignment strategies. More details are discussed in Appendix \ref{appn:limit}.

\section*{Acknowledgements}
This work is supported by the computational shared
facility at the University of Manchester and the
University of Manchester President’s Doctoral
Scholar award. This work is supported in part by Baidu Search Science. This work has been partially supported by project MIS 5154714 of the National Recovery and Resilience Plan Greece 2.0 funded by the European Union under the Next Generation EU Program.

\section*{Limitations}
% \paragraph{Fixed Tokenizer and Vocabulary}
% We did all experiments within the same model family (e.g. Pythia, LLaMA), where the oracle model and the policy models share the vocabulary and tokenizer. This is to facilitate the implementation of SePO algorithm so that we avoid complicated key token mapping across different vocabularies. However, this setting can limit the application of SePO in real-world scenarios that require flexible adjustment of policy models. 
% Targeting this limitation, in future work, we will work towards a new implementation of SePO that enables flexible token mappings between different tokenizers and vocabularies, which enables one oracle model that can provide supervision for any policy model families. There are no methodological barriers but only engineering works. Specifically, we will break down the old token-level score assignments into the character level and recombine the character-level scores under the new tokenization scheme.

\paragraph{Model Scaling}
Due to limitations in computational resources, we didn't extend our experiments to stronger policy models to provide a clear landscape of the scalability of SePO. In addition, all results in this work are obtained with a weak oracle model supervising a strong policy model. In future work, we will include more capable oracle models and policy models such as LLaMA3-Instruct-70B and the Mistral model family to further study the trends in scalability and bottlenecks of SePO. We will also examine the effect of applying a strong oracle model to weak policy models in improving their capabilities.

\paragraph{Token Ratio Selection}
Though SePO enables the model to automatically select key tokens, current token selection ratios are empirically determined. As different data sources model different data distributions, the selection ratios for key tokens can vary drastically, leaving "how to smartly decide the token selection ratios for different data and target policy models" an open problem. In future work, we will explore heuristic or training-based algorithms for the determination of token selection ratios.

% \section*{Acknowledgments}

% This document has been adapted
% by Steven Bet

% Bibliography entries for the entire Anthology, followed by custom entries
%\bibliography{anthology,custom}
% Custom bibliography entries only
\bibliography{tacl2021}

\appendix
\section{Related Work}\label{appn:related_work}
\subsection{Response-Level Preference Optimization}
With the continuous development of LLM capabilities, aligning model outputs with human values and preferences receives increasing research interests, which is commonly achieved via Reinforcement Learning from Human Feedback (RLHF)~\citep{ouyang2022training,stiennon2020learning}.
Though effective, RLHF often faces challenges like instability during training and inefficiency in requiring a separate reward model, motivating the development of direct alignment strategies.
Recent approaches have emerged to address these issues without relying on complex reward modeling. \citet{rafailov2024direct} introduce Direct Preference Optimization (DPO), a ground-breaking work that leverages a closed-form solution of the optimal policy to replace the reward values in the Bradly-Terry model, bypassing the reward modeling stage. \citet{azar2024general} provide theoretical analysis upon the framework of RLHF and DPO and propose IPO based on these insights to alleviate the over-fitting problems of DPO. \citet{yuan2023rrhf} propose Reinforcement Ranking from Human Feedback (RRHF), which aligns model outputs through a ranking loss of the response pairs, further bypassing the need for a reference model during training and minimizing the need for extensive hyperparameter tuning. Similarly, Simple Preference Optimization (SimPO)~\citep{meng2024simpo} achieve alignment via contrasting on a length-regularized implicit reward based on average log probability to improve computational efficiency and prevent the over-length preferences of DPO. SamPO~\citep{lu2024eliminating} also addresses verbosity in DPO by random-sampling the same amount of tokens from chosen and rejected responses in reward estimation. In scenarios where pair-wise data is unavailable, \citet{ethayarajh2024kto} present KTO, which integrates human biases from prospect theory into the alignment process, estimating human expectations for the responses for contrastive training.

\subsection{Token-Level Preference Optimization}
Due to the response-level supervision signals, the above alignment methods are mostly optimized on sentence bandits. This paradigm can be sub-optimal due to the sequential, auto-regressive nature of the token generation process in LLMs. This drawback has led to exploring token-level alignment methods by modeling LLM decoding as Markov Decision Processes (MDP).
Token-level DPO~\citep{zeng2024token} optimizes policy models at the token level by incorporating forward KL divergence constraints for each token, improving alignment and diversity without additional supervision signals.
Some other works introduce supervision signals at the token level. \citet{chan2024dense} use attention weights from Transformers to redistribute the response-level rewards across tokens in an unsupervised manner, aiming to stabilize the training process of RLHF.
\citet{zhong2024dpo} iteratively utilize DPO models to provide token-level rewards for each response and optimize on these token-level rewards with the PPO algorithm.
\citet{yoon2024tlcr} breaks down token-level rewards into continuous rewards by prompting powerful language models and training a discriminator.
\citet{zhou2024t} introduces token-level reward regularization via prompting LLMs.
In addition, token-level methods have been explored in related tasks such as mathematical reasoning, which require fine-grained step-wise alignment. Popular methods for obtaining token-level preferences include Monte-Carlo Tree Search~\citep{xie2024monte,chen2024step} and annotations from human/capable LLMs~\citep{lai2024step,setlur2024rl}, where they demonstrate potential in improving the precision and coherence of the target policy models. Compared to the above methods, SePO is the first method that utilizes the token-level reward function learned by DPO to perform selective preference alignment on key tokens. It proposes a token selection strategy that is more effective and efficient than previous methods, and firstly proves the viability of only optimizing on crucial supervision signals for LLM alignment.

The implicit relation between token-level rewards and DPO algorithm is first discussed by \citet{rafailov2024r}, which theoretically shows that DPO learns an inherent optimal Q-function for each action taken. Based on this intuition,
\citet{chen2024low} utilized DPO rewards to filter unimportant tokens in the rejected responses. \citet{chen2024bootstrapping} propose a self-alignment method that uses implicit DPO rewards to build new alignment data without external feedback. \citet{xia2024inverse} inverses DPO training to assign token-level reward via an superior policy. \citet{christopoulou2024sparsepo} proposes a similar selective alignment strategy via token masks with model activations or learable weights. Compared to these similar works, SePO has three main advantages: 1) the contrastive and reference model-free training paradigm reduce costs and complexity compared to reinforcement learning-based methods; 2) the oracle model can be trained once and provide selection signals for any newly-introduced data sources; 3) SePO provides a novel and effective paradigm for weak-to-strong generalization.

\subsection{Weak-to-Strong Generalization}
Weak-to-strong generalization aims to elicit the capabilities of strong student
models with weak supervision signals, which lies at the core of super alignment technologies~\citep{burns2023weak} and becomes a significant topic in the ongoing development of LLMs. This approach addresses the challenge of aligning increasingly powerful models with human values, particularly as models surpass human-level capabilities.
\citet{burns2023weak} first propose the concept and show that strong models fine-tuned on labels from weaker supervisors can outperform their weak teachers, though naive fine-tuning has limitations and may not scale well with superhuman models. \citet{lang2024theoretical} introduce a framework providing theories behind how strong models can correct weak models' errors and generalize beyond their knowledge.
\citet{yang2024super} discuss the risk of "weak-to-strong deception" where strong models exploit weak supervisors to appear aligned while misbehaving in un-monitored areas, stressing the need for more robust alignment strategies. \citet{charikar2024quantifying} quantify the performance gains of strong models over weaker ones, introducing misfit error as a key metric for optimizing this process.
Additional studies have applied the ideas of weak-to-strong generalization in tasks such as high-quality token selection~\citep{lin2024rho} and LLM alignment, including weak-to-strong search~\citep{zhou2024weak}, Aligner~\citep{ji2024aligner}, and weak-to-strong extrapolation~\citep{zheng2024weak}.

\section{Cross-Vocabulary Token Selection}
Real-world scenarios require oracle models to flexibly provide supervision signals for different policy models. Due to varied tokenization schemes, direct transfer of key tokens across different model families can be difficult.
Targeting this limitation, we implement a script that enables flexible token mappings between different tokenizers and vocabularies. Note that there are only engineering works in this implementation. The algorithm is described in Algorithm \ref{alg1}.

In the algorithm, $T$ and $c$ denotes token-level and character-level split of the target training data item. $p$ records the positions of the key tokens obtained from the oracle model. The function $split\_string$ splits the training data into characters, $get\_position$ obtains the target token's position, and $match$ checks the position of potential key characters to determine if $c$ is located in $T$. The basic logic of this transfer is to break the key tokens based on the source tokenizer into character level. For the token sequence obtained via the target tokenizer, any tokens tha contains a charater with $is\_key\_character=True$ is considered a key token in the new sequence. The algorithm breaks down the old token-level score assignments into the character level and recombine the character-level scores under the new tokenization scheme, enabling smooth transfer of key token supervision signals across different vocabularies and tokenizers. 

We use characters to align tokens because different vocabularies can tokenize words into different combinations. For example, one source tokenizer tokenizes the word “misunderstandings” into [“misunderstand”, ”ing”, “s”], while another target tokenizer could tokenize the word into [“mis”, “understanding”, “s”]. Direct word-level alignment from either source or target words can only match “s”, which leads to information loss on the scores during transformation across tokenizers. Therefore, we perform a character-level match to ensure all useful information is included. In the algorithm, both character-word matching action (line 8 and line 16 of algorithm 1) tries to match both characters and positions. For example, the “s” in “sam” will only be matched once with “sam” because we also check positions. We will modify the seudo-code in the next version of the paper to avoid these misunderstandings.

We have applied the cross-vocabulary token selection algorithm in our experiments. For example, we have adapted both TinyLLaMA-1.1B and Pythia-1B results to LLaMA3 training.

\begin{algorithm}[H]
\caption{Cross-vocab token transfer.} 
\label{alg1} 
\begin{algorithmic}[1]
\REQUIRE Target training data item $D_t\in\Sigma^{Q}$, source key token positions $P_s=\{p_s^1,...,p_s^m\}$, source tokenizer $Token_s$, target tokenizer $Token_t$. 
\ENSURE Target key token positions $P_t=\{p_t^1,...,p_t^n\}$. 
\STATE $\{T_s^1,...,T_s^M\}\gets Token_s(D_t)$
\STATE $\{T_t^1,...,T_t^N\}\gets Token_t(D_t)$
        \STATE $\{c_1,...,c_Q\} \gets split\_string(D_t)$
        \STATE $P_t \gets \varnothing$
        \FOR{$c_i$ in $\{c_1,...,c_Q\}$}
            \STATE $c_i.is\_key\_character=False$
            \FOR{$T_s^j$ in $\{T_s^1,...,T_s^M\}$}
                \IF{$match(c_i, T_s^j)$} 
                   \IF{$get\_position(T_s^j)$ in $P_s$} 
                      \STATE $c_i.is\_key\_character=True$
                   \ENDIF
                \ENDIF 
            \ENDFOR
            \IF{$c_i.is\_key\_character=True$}
            \FOR{$T_t^j$ in $\{T_t^1,...,T_t^M\}$}
                \IF{$match(c_i, T_t^j)$}
                   \STATE $P_t.update(get\_position(T_t^j))$
                \ENDIF
            \ENDFOR
            \ENDIF
        \ENDFOR
\end{algorithmic} 
\end{algorithm}

\section{Proof of Theorems}
\subsection{Theorem 1}\label{proof_theorem1}
\textit{With a reference model $\pi_{ref}$, fitting any reward functions $r$ that are consistent to the Bradley-Terry model with the DPO algorithm leads to an optimal estimation of another reward function $\hat{r}$ that decouples the response-level reward values into the token level, which satisfies:}
\begin{equation}
    \hat{r}(s_t, a_t)\propto \mathop{log}\frac{\pi^*(a_t|s_t)}{\pi_{ref}(a_t|s_t)}
\end{equation}
\textit{where $\pi^*$ denotes the oracle policy obtained by DPO.}

\begin{proof}
This proof is heavily inspired by \citet{rafailov2024r}. Common policy gradient-based RL practices~\citep{schulman2017proximal} optimize Eqn. \ref{eqn_align} in token-level MDP with an entropy-bonus $\mathcal{H}(\pi)$ and a KL-constraint with $\pi_{ref}$:
\begin{small}
\begin{equation}
    \mathop{max}_{\pi}\mathbb{E}_{a_t\sim \pi(\cdot|s_t)}\sum_{t=1}^T\left[\hat{r}(s_t,a_t)+\beta\mathop{log}\pi_{ref}(a_t|s_t)+\beta\mathcal{H}(\pi)\right]
\end{equation}
\end{small}
where $s_1\sim \rho$. Its optimal solution is given by \citet{ziebart2008maximum} under the maximum entropy RL setting:
\begin{small}
\begin{equation}\label{eqn_op}
    \pi^*(a_t|s_t) = \mathop{exp}\left((Q^*(s_t, a_t)-V^*(s_t))/\beta\right)
\end{equation}
\end{small}
where $Q^*(s_t, a_t)$ is the optimal Q-function that estimates the partial returns of $a_t$ under $s_t$, and $V^*(s_t)$ estimates the expected future returns under current state $s_t$. Under a KL-divergence regularization with the reference model, the relationship between Q-function and token-level reward values can be established as follows with the Bellman equation:
\begin{small}
\begin{equation}\label{eqn:bellman}
    Q^*(s_t, a_t) = r(s_t, a_t) + \beta \log \pi_{ref}(a_t|s_t) + V^*(s_{t+1})
\end{equation}
\end{small}
where $V^*(s_T)=0$. Combining Eqn. \ref{eqn_op} and \ref{eqn:bellman}, we have:
\begin{small}
\begin{equation}
\label{eqn:advantage}
    r(s_t, a_t)=\beta \log\frac{\pi^*(a_t|s_t)}{\pi_{ref}(a_t|s_t)}+V^*(s_{t})-V^*(s_{t+1})
\end{equation}
\end{small}
Under Assumption \ref{theorem:assum}, we substitute Eqn. \ref{eqn:advantage} into Eqn. \ref{eqn:assum}, the response-level reward is factorized as follows:
\begin{small}
\begin{equation}
\label{eqn:sum}
\begin{aligned}
    r(q, \tau)&=\sum_{t=1}^Tr(s_t, a_t)\\
    &=\mathop{\sum}_{t=1}^T\beta \log\frac{\pi^*(a_t|s_t)}{\pi_{ref}(a_t|s_t)}+V^*(s_1)
\end{aligned}
\end{equation}
\end{small}

Note that in DPO, $V^*(s_1)$ remains unchanged for each response pair as they have the same start state $s_1$. This means the preference modeling process for each response pair only depends on the first term of Eqn. \ref{eqn:sum}. Therefore, we conclude that the optimal policy $\pi^*$ learned by DPO inherently fits the response-level reward value with another token-level reward function $\hat{r}(s_t, a_t)$, which is parameterized as 
\begin{equation}\label{eq:reward_param}
    \hat{r}(s_t, a_t)=\beta\mathop{log}\frac{\pi(a_t|s_t)}{\pi_{ref}(a_t|s_t)}
\end{equation}
This indicates our results in Eqn. \ref{eqn:theorem} and completes the proof.
\end{proof}

\subsection{Theorem 2}\label{proof_theorem2}
\textit{Let $\mathcal{D}$ be the target preference dataset with $N$ samples, and $\mathcal{S}$ be a random selection of $m$ samples from $\mathcal{D}$ (m$\leq N$), which is drawn independently and uniformly. Then we have:}

\textit{The reward function $r_{\mathcal{S}}$ modeled by fitting $\mathcal{S}$ with DPO is a pessimistic estimation of the target reward function $r_{\mathcal{D}}$. The result can be formalized as:}
\begin{equation}\label{eq:exp_small}
    \mathbb{E}_{\mathcal{S}}(r_{\mathcal{S}}(q, y))\leq r_{\mathcal{D}}(q, y)
\end{equation}
\textit{where $q,y$ denote any query-response pairs drown from $\mathcal{D}$. The equality holds when $m=N$.}

\begin{proof}
% We begin by transferring the target equation as follows:
% \begin{equation}\label{eq:exp_transfer}
%     \mathbb{E}_{\mathcal{S}}(r_{\mathcal{S}}(q, y))-r_{\mathcal{D}}(q, y)\leq 0
% \end{equation}
As the reward functions are parameterized via fitting the DPO algorithm on the datasets, we substitute Eqn. \ref{eqn:closeform_dpo} into Eqn. \ref{eq:exp_small}. As the term $\beta\log\pi_{ref}(y|q)$ and $\beta\log Z(q)$ are unrelated to $\mathcal{S}$, they are easily canceled and we transfer the proof target into comparing the optimal policy functions:
% \begin{equation}\label{eq:exp_substitute}
%     \mathbb{E}_{\mathcal{S}}\left[\beta\log \pi_{\mathcal{S}}^*(y|q)\right]-\beta\log \pi_{\mathcal{D}}^*(y|q)=0
% \end{equation}
% With the above equation, we transfer the proof target as follows:
\begin{equation}\label{eq:target}
    \mathbb{E}_{\mathcal{S}}\left[\log \pi_{\mathcal{S}}^*(y|q)\right]\leq\log \pi_{\mathcal{D}}^*(y|q)
\end{equation}

Let $\mathcal{D}=\{x_1, x_2, ..., x_N\}$, where $x_i$ represents an $(q,y,r)$ data point, and $\mathcal{S}=\{x_{i_1}, x_{i_1}, ..., x_{i_m}\}$, where $x_{i_j}$ is selected from $\mathcal{D}$. 
To show that $S$ is an unbiased estimator of the target data distribution, we calculate its empirical distribution over all possible random samples drawn from $\mathcal{D}$. The empirical distribution $P_{\mathcal{S}}(X)$ based on the sampled dataset is as follows: 
\begin{equation}
    P_{\mathcal{S}}(X)=\frac{1}{m}\sum_{j=1}^m\delta(X=x_{i_j})
\end{equation}
where $\delta$ indicates the presence of a sample $X$. Taking its expectation over all possible sampled datasets, we have:
\begin{small}
\begin{equation}\label{eq:exp_pd2}
\begin{aligned}
    \mathbb{E}_{\mathcal{S}}\left[P_{\mathcal{S}}(X)\right]&=\mathbb{E}_{\mathcal{S}}\left[\frac{1}{m}\sum_{j=1}^m\delta(X=x_{i_j})\right]\\
    &=\frac{1}{m}\sum_{j=1}^m\mathbb{E}_{\mathcal{S}}\left[\delta(X=x_{i_j})\right]
\end{aligned}
\end{equation}
\end{small}
As each $x_{i_j}$ is equally likely to be any $x_i\in\mathcal{D}$, we have
\begin{equation}\label{eq:exp_pd1}
\begin{aligned}
\mathbb{E}_{\mathcal{S}}\left[\delta(X=x_{i_j})\right]&=\frac{1}{N}\sum_{i=1}^N\delta(X=x_i)\\
&=P_{\mathcal{D}}(X)
\end{aligned}
\end{equation}
Substituting Eqn. \ref{eq:exp_pd1} into Eqn. \ref{eq:exp_pd2}, we have
\begin{equation}\label{eq:exp_pd3}
\mathbb{E}_{\mathcal{S}}\left[P_{\mathcal{S}}(X)\right]=P_{\mathcal{D}}(X)
\end{equation}
Based on the same reference model and empirical data distribution (Eqn. \ref{eq:exp_pd3}), we expect training on $S$ with DPO to obtain an unbiased estimation of the target optimal policy function:
\begin{equation}\label{eq:intermediate}
    \mathbb{E}_{\mathcal{S}}\left[ \pi_{\mathcal{S}}^*(y|q)\right]=\pi_{\mathcal{D}}^*(y|q)
\end{equation}
Because logarithm is a strictly concave function, according to Jensen's inequality, we have:
\begin{equation}\label{eq:jensen}
    \mathbb{E}_{\mathcal{S}}\left[\log \pi_{\mathcal{S}}^*(y|q)\right]\leq \log\mathbb{E}_{\mathcal{S}}\left[ \pi_{\mathcal{S}}^*(y|q)\right]
\end{equation}
Substituting Eqn. \ref{eq:intermediate} into Eqn. \ref{eq:jensen}, we prove Eqn. \ref{eq:target}, which completes the proof.
Note that when $m=N$, we have $\mathcal{S}=\mathcal{D}$, and the training process gives an unbiased estimation of target reward function $r_{\mathcal{D}}$.
% During training, with $S$ and $D$ fitted on the same parameterized function $\log \pi^*(y|q)$, we have
% \begin{equation}
%     \mathbb{E}_{\mathcal{S}}\left[\log \pi_{\mathcal{S}}^*(y|q)\right]=\log \pi_{\mathcal{D}}^*(y|q)
% \end{equation}
\end{proof}

\section{Reward Accumulation}\label{appn:reward_accumulate}
\paragraph{Reward assignment via GPT-4}
The basic intuition of this work is that the token-level contribution of the response-level reward values is unevenly distributed, which provides opportunities for selective training on key tokens to achieve efficient alignment. We provide a direct illustration by utilizing GPT-4~\citep{achiam2023gpt} to annotate the token-level contributions of 1,000 randomly sampled query-response pairs from UltraFeedback~\citep{cui2023ultrafeedback} and QA-Feedback~\citep{wu2024fine} dataset. We tokenize each query-response pair with the LLaMA2 tokenizer and vocabulary, and include them in the prompts. For UltraFeedback, we focus on the objective of Helpfulness and use the following prompting template to obtain the scores:

\begin{center}
    \fcolorbox{black}{gray!10}{
    \parbox{.95\linewidth}{
    \textit{You are an assistant to human. You will be provided with a query and a response. For the objective of helpfulness, you will be provided with a human rating of this response ranging from 1 to 5. Consider the contribution of each token to this human rating and distribute the response-level rating to each response token. Here is an example:}\\
    \\
    \textit{Query:} What are some cool countries to visit in Asia?\\
    \textit{Response:} [“Hm"; “,"; “it"; “’s"; “difficult"; “to"; “pick"; “just"; “one"; “."
“Thailand"; “,"; “Japan"; “,"; “Vietnam"; “,"; “Indonesia"; “,"; “and"; “many"; “others"; “have"; “unique"; “history"; “and"; “culture"; “."]\\
    \textit{Human Rating:} 2 \\
    \textit{Token-level Reward:} [0.03; 0.01; 0.01; 0.01; 0.1; 0.02; 0.03; 0.05; 0.07; 0.01; 0.15; 0.01; 0.35; 0.01; 0.29; 0.01; 0.25; 0.01; 0.01; 0.07; 0.1; 0.03; 0.1; 0.12; 0.01; 0.13; 0.01]\\
    \\
    \textit{Following the format of the above example, consider and distribute the token-level reward for the following pair:}\\
    \textit{Query:} \{$\mathcal{Q}$\}\\
    \textit{Response:} \{$\mathcal{R}$\}\\
    \textit{Human Rating:} \{$\mathcal{S}$\} \\
    \textit{Token-level Reward:} 
    }
}
\end{center}
In the prompt, $\mathcal{Q}$ denotes the target query, $\mathcal{R}$ denotes the split tokens from the target response, and $\mathcal{S}$ denotes the corresponding human rating values obtained from the dataset annotations. For the QA-Feedback dataset, we concatenate the context and the question, and utilize a similar prompting strategy. As QA-Feedback only provides the relative preference for each response pair, we quantify the point-wise score for each response by counting their winning times against other responses, where each win is worth 1 point. Since the original data collected 4 responses for each query, the ratings of all responses are between 0 and 3.

With the GPT-4 assigned token-level rewards, we visualize their distributions by unifying their contributions to the response-level reward by percentages. For a response with token-level rewards:
$$
r=[r_1, r_2, ..., r_n]
$$
where $r_i$ denotes the reward value for $i$-th token, we first sort them by their values. Specifically, we sort chosen responses (or with higher human ratings) in descending order, while we sort chosen responses (or with higher human ratings) in descending order, as we expect tokens with higher values to contribute more to the chosen actions and tokens with lower values to be crucial for the rejection actions. With the sorted rewards:
$$
r_s=[r_{s1}, r_{s2}, ..., r_{sn}]
$$
we normalize their contribution at the following token percentages: $p=[10\%, 20\%, 30\%, 40\%, 50\%, 60\%, 70\%, 80\%, 90\%]$, where at each percentage the result is calculated as follows:
$$
\mathbf{P}_i = \frac{\sum_{i=1}^{p_i\cdot|r_s|}r_{si}}{\sum_{i=1}^{|r_s|}r_{si}}
$$
These outcomes are then visualized in Figure \ref{fig:reward_distribution} to support our intuitions.

\begin{figure}[htpb]
\centering
\includegraphics[width=7cm,height=4.67cm]{./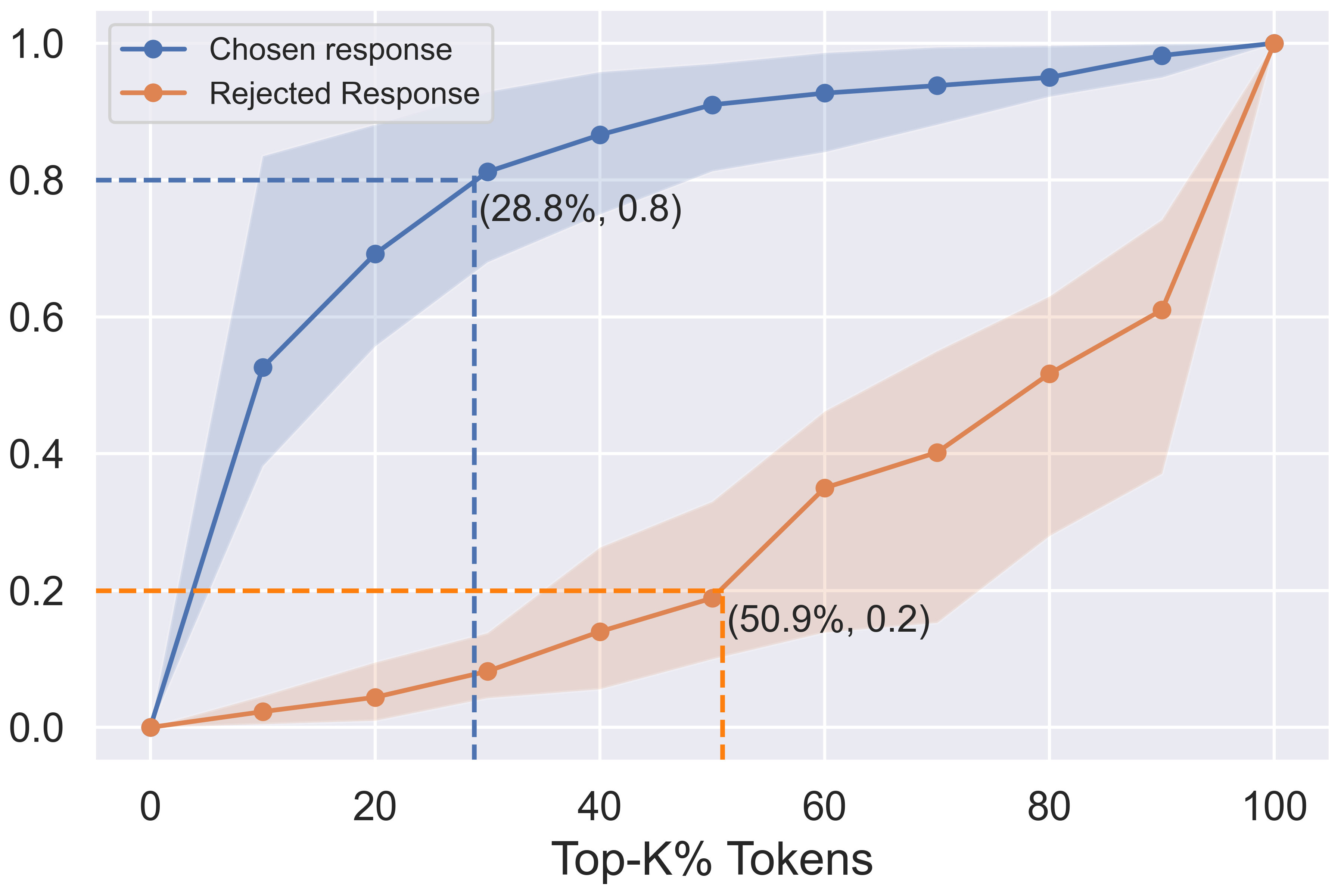}
\caption{Token-level reward accumulations on QA dataset. As tokens with high rewards are considered key tokens for chosen responses, their Top-K\% tokens are accumulated in descending order with the highest rewards. In contrast, rewards are accumulated in ascending orders for rejected responses.}
\label{fig:reward_distribution_QA}
\end{figure}

\paragraph{Results on QA Task}
To broadly validate the distribution of token-level rewards on different tasks, in Figure \ref{fig:reward_distribution_QA}, we present the token-level reward accumulations for 1,000 samples from a question answering (QA) dataset~\citep{wu2024fine}. For QA, the Top-28.8\% tokens occupy the highest 80\% rewards for chosen responses, while the lowest 50.9\% tokens only occupy 20\% rewards for rejected responses. These observations on QA dataset are similar to our conclusions on instruction following, further proving that not all tokens are equally effective in preference alignment, and optimizing on all available tokens can be noisy and inefficient.

\begin{figure*}[htpb]
\centering
\includegraphics[width=14cm,height=3.733cm]{./figs/weak-to-strong.pdf}
\caption{Application of SePO on weak-to-strong generalization. Left: SePO utilizes the weak oracle model to steer the strong policy model; Right: Useful supervision signals in out-of-distribution data are selected by the oracle model to enhance alignment on the policy model.}
\label{fig:weak-to-strong}
\end{figure*}

\section{GPU Training Hours}\label{appn:gpu_hours}
We further discuss the following 2 key questions about GPU training hours for SePO:

\paragraph{How does SePO save the GPU training hours?}
Compared to other preference optimization methods such as SimPO and DPO, SePO saved GPU training hours from \textbf{reduced response sequence length}. We use the oracle model to select Top-K\% (e.g. in Table 1, K=30) tokens. During training, only the response sequence before the last Top-K\% token is retained as inputs, because the tokens after the last Top-K\% token will never be used. It is worth noting that the input of the model are not un-continuous tokens, but complete sentences with the segments after the last key token truncated. For each training key token, the model still sees a completed sentence because the model is auto-regressive from left to right. The costs for these tokens can be saved. To further strengthen this argument, we report the average sequence lengths on UltraFeedback training split before and after token selection, with LLaMA and Pythia tokenizer series:

\begin{table}[!hbt]
\resizebox{.47\textwidth}{!}{
\begin{tabular}{l|cc}
\toprule
\textbf{Tokenizer} & \textbf{Before Truncation} & \textbf{After Truncation}\\
\midrule
\textbf{LLaMA Series} & 496.07 & 383.86\\
\textbf{Pythia Series} & 444.06 & 340.18\\
\bottomrule
\end{tabular}}
\caption{Average sequence lengths on the UltraFeedback training split.}
\label{tab:data-length}
\end{table}

As shown, truncating after the last selected token leads to an average reduction of around 25\% in sequence length, significantly reducing GPU training hours.

\paragraph{Does the oracle model training cost affect the overall efficiency of SePO?}
We would like to clarify that though SePO introduce extra training cost for oracle model training, it does not affect the overall efficiency of the algorithm. Firstly, the oracle model training and token selection are \textbf{only required once} to build the selective training dataset, then the dataset is directly applicable to any policy models. For example, in Table \ref{tab:main-results}, one selected dataset is reused for multiple policy models without extra cost. Therefore, it is unfair to recount the oracle model training time for each new experiment. The efficiency of SePO also increases as the selected data is re-used. Secondly, even recounting oracle model training time for each new experiment (very unfair) results in comparable GPU training time with SimPO. The training time for all used oracle models is shown in Table \ref{tab:ora-training}. For example, training LLaMA3-Instruct-8B with Pythia-1B oracle model requires 84.36 (shown in Table \ref{tab:main-results})+32.01+8.45 (shown in Table \ref{tab:ora-training})=124.82 GPU hours, which is still comparable to SimPO (136.9 hours) and significantly outperforms DPO (201.0 hours).

\begin{table}[!hbt]
\resizebox{.47\textwidth}{!}{
\begin{tabular}{l|cc}
\toprule
\textbf{Model} & \textbf{SFT} & \textbf{DPO}\\
\midrule
TinyLLaMA-1.1B & 33.71 & 9.71\\
Pythia-70M & 3.2 & 0.93\\
Pythia-160M & 6.63 & 2.37\\
Pythia-410M & 18.2 & 7.41\\
Pythia-1B & 32.01 & 8.45\\
Pythia-1.4B & 42.48 & 11.0\\
\bottomrule
\end{tabular}}
\caption{The training time of the SFT and DPO stages for all used oracle models.}
\label{tab:ora-training}
\end{table}

\section{Weak-to-Strong Generalization}\label{appn:weak2strong}
An illustration of SePO on weak-to-strong generalization is shown in Figure \ref{fig:weak-to-strong}.
\paragraph{Weak Oracle Modeling}
The intuition behind weak oracle modeling is that weak supervisors (oracle models) only identify which tokens are most effective in enhancing the alignment performance, rather than directly providing supervision signals, which normally requires stronger capabilities than the student model.

\paragraph{Weak Data Supervision}
As the policy model becomes stronger, continual full optimization on the original data distribution can lead to over-optimization on the reward function~\citep{gao2023scaling,rafailov2024scaling}, which can seriously degrade policy model performance. Online preference optimization~\citep{xiong2024iterative,xie2024monte} alleviates over-optimization with online annotations of new in-distribution data, but can be costly for strong policy models.

In weak data supervision, instead of full optimization on the training data, we expect selective optimization on the key tokens to prevent over-optimization on the out-of-distribution data, while still leveraging effective supervision signals to further improve the policy model.

\section{Experimental Settings}\label{appn:exp_set}
\subsection{Training Details}
More details about the training process of SePO, including hardware and software we used, the training dataset information, and links to the foundation models, are listed in Table \ref{tab:training_detail}.

\subsection{Baseline Methods}
We compare the performance of SePO with the following state-of-the-art offline preference optimization methods to indicate its effectiveness. We first introduce two alignment methods that are dependent on the reference models:

\textbf{DPO}~\citep{rafailov2024direct} leverages the closed-form solution of the optimal policy model in the form of the reward function, and explicitly models their relations and substitute the reward functions in the Bradly-Terry model with the optimal policy, which enables reward model-free preference alignment with direct preference optimization. The loss function for DPO is shown in Eqn. \ref{eqn_dpo}.

\textbf{IPO}~\citep{azar2024general} sets up a general framework for preference alignment based on a generalized preference optimization objective. Based on this paradigm, it provides a variant of DPO based on identity mapping that prevents the over-fitting problem. The improved loss function is designed as follows:

\begin{small}
\begin{equation}
\begin{aligned}
    -\mathbb{E}_{\left(q,  y_w, y_l \right) \sim \mathcal{D}}
    \left(\log\frac{\pi_\theta(y_w|x)}{\pi_{ref}(y_w|x)}-\log\frac{\pi_\theta(y_l|x)}{\pi_{ref}(y_l|x)}-\frac{1}{2\lambda} \right)^2
\end{aligned}
\end{equation}
\end{small}

where $\lambda$ is a hyper-parameter.

Though the above methods achieve outstanding performance, their dependence on reference models can lead to computational inefficiency and complicated optimization process. 
We introduce another two simple yet competitive reference model-free alignment methods: 

\begin{table*}[!hbt]
\resizebox{1.\textwidth}{!}{
\begin{tabular}{lc|lc|lc}
\toprule
\textbf{Algorithm} & \textbf{Hyper-parameters} & \textbf{Algorithm} & \textbf{Hyper-parameters} & \textbf{Algorithm} & \textbf{Hyper-parameters} \\ \midrule
DPO &  $\beta\in[0.01, 0.05, 0.1]$ & IPO & $\lambda\in[0.01, 0.1, 0.5, 1.0]$ & RRHF & $\lambda\in[0.1, 0.5, 1.0, 10.0]$\\
SimPO &  $\beta\in[2.0, 2.3, 2.5]$/$\lambda\in[0.5, 1, 1.5]$ & TDPO & $\beta\in[0.01, 0.05, 0.1]$ & SePO-rand & $\gamma\in[2.0, 2.3, 2.5]$/$\lambda\in[0.5, 1, 1.5]$\\
\bottomrule
\end{tabular}}
\caption{The searched hyper-parameters for baseline models.}\label{tab_hyperpara}
\end{table*}

\textbf{RRHF}~\citep{yuan2024rrhf} directly optimize the probability of the target response pairs with a simple pairwise ranking loss, which increases the probability of preferred response and suppress the dis-preferred response. To avoid diverging too much from the original policy model, the training process is regularized with an SFT-based loss on the chosen responses. Specifically, the model is optimized via the following loss function:

\begin{small}
\begin{equation}
\begin{aligned}
    -\mathbb{E}_{\left(q,  y_w, y_l \right)} &[\mathop{max}(0, -\frac{1}{|y_w|}\mathop{log}\pi_{\theta}(y_w|x)+\\
    &\frac{1}{|y_l|}\mathop{log}\pi_{\theta}(y_l|x))-\lambda log\pi_{\theta}(y_w|x)]
\end{aligned}
\end{equation}
\end{small}

\textbf{SimPO}~\citep{meng2024simpo} focuses on the over-length bias problem of DPO that the model tends to prefer responses with redundant sequences, by introducing a length-regularized probability of the response pairs with a margin. Specifically, the SimPO objective function is formalized as follows:

\begin{small}
\begin{equation}
\begin{aligned}
    -\mathbb{E}_{\left(q,  y_w, y_l \right) \sim \mathcal{D}} &[\mathop{log}\sigma(\frac{\beta}{|y_w|}\mathop{log}\pi_{\theta}(y_w|x)-\\
    &\frac{\beta}{|y_l|}\mathop{log}\pi_{\theta}(y_l|x)-\lambda)]
\end{aligned}
\end{equation}
\end{small}

\textbf{TDPO}~\citep{zeng2024token} improves the divergence efficiency of DPO by incorporating a forward KL
divergence constraints for each token, improving both alignment and diversity without token-level supervision signals. Specifically, TDPO introduces an additional term for
fine-grained control over the KL divergence:

\begin{small}
\begin{equation}
\begin{aligned}
    -\mathbb{E}_{\left(q,  y_w, y_l \right) \sim \mathcal{D}}
    &\log \sigma(\beta \log\frac{\pi_\theta(y_w|x)}{\pi_{ref}(y_w|x)}-\beta\log\frac{\pi_\theta(y_l|x)}{\pi_{ref}(y_l|x)}\\
    &-\beta\mathbf{D}_{SeqKL}(x,y_l;\pi_{ref}||\pi_{\theta})\\
    &+\beta\mathbf{D}_{SeqKL}(x,y_w;\pi_{ref}||\pi_{\theta}))
\end{aligned}
\end{equation}
\end{small}

where $\mathbf{D}_{SeqKL}$ denotes a sequential KL-divergence.

\textbf{SePO-rand} is a self-implemented method that is used to evaluate the effectiveness of the token selection process for SePO. It bypasses the whole oracle modeling and token selection process in SePO, and randomly selects k\% tokens from the pair-wise training data. The target policy model is still optimized via Eqn. \ref{eq:optim}. To enable fair comparisons with SePO in the settings of Table \ref{tab:main-results}, we also set $k=30$ during the random selection process for SePO-rand.

We mostly follow the implementation details of SimPO on hyper-parameter search for baseline models, where the searched coefficients are listed in Table \ref{tab_hyperpara}.

\subsection{Evaluation Benchmarks}
AlpacaEval 2.0 consists of 805 queries to evaluate the models’ versatile conversational abilities. Following the standard setting, we report win rates and length-controlled (LC) win rates of evaluated models against GPT-4-turbo responses. The LC win rates are designed to reduce influences of model verbosity. MT-Bench covers eight categories with 80 queries. We report the average scores ranging from 0 to 10. Arena-Hard extends MT-Bench with 500 high-quality queries, where we follow the standard setting to report win rates against GPT-4-0314 model outputs.

\section{Additional Experimental Results}\label{appn:experiment}
\subsection{Fine-Grained Evaluation on MT-Bench}
Due to the widely reported poor separability
of MT-Bench reported by previous works~\citep{meng2024simpo,li2024crowdsourceddatahighqualitybenchmarks}, we further display fine-grained scores of model capability, which we organize 8 categories as follows: Writing, Roleplay, Extraction, Reasoning, STEM, Humanities, Math, and Coding.

\begin{table*}[!hbt]
\resizebox{1.\textwidth}{!}{
\begin{tabular}{cl|cccccccc|c}
\toprule
\textit{Policy Model} & \textbf{Methods} & \textbf{Writing} & \textbf{Roleplay} & \textbf{Reasoning} & \textbf{Math} & \textbf{Coding} & \textbf{Extraction} & \textbf{STEM} & \textbf{Humanities} & \textbf{Overall}\\
\midrule
\multirow{8}{*}{\textbf{Pythia-SFT-2.8B}} & Base & 4.25 & 4.0 & 2.45 & 1.2 & 1.8 & 2.45 & 2.7 & 4.23 & 2.8 \\
& +DPO & 4.7 & 4.4 & 2.6 & 1.55 & 2.13 & 2.55 & 3.2 & 3.95 & 3.16\\
& +IPO & 5.0 & 4.78 & 2.9 & 1.1 & 2.41 & \textbf{2.8} & 2.41 & 3.28 & 3.12\\
& +RRHF & 5.35 & 4.3 & 2.74 & 1.6 & 2.25 & 2.45 & 2.2 & 2.9 & 2.93\\
& +SimPO & 5.2 & 4.85 & 3.1 & \textbf{2.52} & 2.0 & 2.2 & 2.55 & 3.3 & 3.3\\
& +TDPO & 4.9 & 4.8 & 2.45 & 1.7 & 2.35 & 2.4 & \textbf{3.45} & 3.8 & 3.26\\
& +SePO-rand & 4.1 & 4.35 & 2.5 & 1.05 & 1.55 & 2.5 & 2.7 & 4.1 & 2.86\\
& +SePO (Ours) & \textbf{6.45} & \textbf{5.1} & \textbf{3.32} & 2.38 & \textbf{2.5} & 2.65 & 2.45 & \textbf{4.45} & \textbf{3.65}\\
\midrule
\multirow{8}{*}{\textbf{Pythia-SFT-6.9B}} & Base & 6.0 & 4.65 & 2.2 & 1.48 & 1.75 & 2.6 & 3.05 & 5.8 & 3.58 \\
& +DPO & 7.2 & 5.78 & 3.7 & 2.87 & 2.85 & 3.8 & 3.9 & 6.7 & 4.7\\
& +IPO & 7.1 & 5.45 & 3.6 & 2.5 & 2.43 & 4.35 & 3.55 & 6.65 & 4.34\\
& +RRHF & 7.4 & 4.2 & 4.2 & 2.5 & 2.4 & 3.5 & 3.4 & 6.1 & 4.31\\
& +SimPO & 8.0 & 4.8 & 4.72 & 3.13 & 2.83 & 3.15 & 3.7 & 5.7 & 4.51\\
& +TDPO & 7.55 & 5.9 & 4.3 & 2.9 & 3.0 & 3.85 & 4.2 & 6.9 & 4.78\\
& +SePO-rand & 6.3 & 4.6 & 2.05 & 1.6 & 1.85 & 2.2 & 3.45 & 5.55 & 3.45 \\
& +SePO (Ours) & \textbf{8.9} & \textbf{5.27} & \textbf{5.6} & \textbf{2.93} & \textbf{2.85} & 5.6 & \textbf{4.45} & \textbf{4.79} & \textbf{5.09}\\
\midrule
\multirow{8}{*}{\textbf{TinyLLaMA-Chat}} & Base & 4.5 & 4.6 & 2.6 & 1.45 & 2.35 & \textbf{2.95} & 3.75 & 4.1 & 3.28\\
& +DPO & 4.6 & 4.7 & 2.65 & \textbf{1.5} & 2.4 & 2.75 & 3.95 & 3.95 & 3.31\\
& +IPO & 4.9 & 4.5 & 2.65 & 1.45 & 2.4 & 2.7 & 4.25 & 4.25 & 3.38\\
& +RRHF & 4.85 & 4.75 & 2.25 & 1.3 & 2.5 & 2.75 & 4.1 & 4.75 & 3.4\\
& +SimPO & 4.9 & 4.55 & 2.1 & 1.35 & 2.25 & 2.6 & 4.6 & 5.5 & 3.28\\
& +TDPO & 4.6 & 4.9 & \textbf{2.85} & 1.4 & \textbf{2.55} & 2.6 & 4.0 & 4.45 & 3.42\\
& +SePO-rand & 4.35 & 4.85 & 2.3 & 1.35 & 2.5 & 2.9 & 3.75 & 4.1 & 3.26\\
& +SePO (Ours) & \textbf{5.55} & \textbf{5.3} & 2.55 & 1.35 & 2.25 & 2.7 & \textbf{4.5} & \textbf{5.85} & \textbf{3.78}\\
\midrule
\multirow{8}{*}{\textbf{LLaMA2-Chat-7B}} & Base & 8.2 & 6.48 & 3.65 & 1.45 & 1.95 & 4.79 & 6.98 & 8.775 & 4.48\\
& +DPO & 7.1 & 6.55 & 4.25 & 2.85 & 2.85 & 5.35 & 6.75 & 7.8 & 5.43\\
& +IPO & 7.5 & 6.75 & \textbf{4.7} & \textbf{3.55} & 2.85 & 5.2 & 6.7 & 8.0 & 5.64\\
& +RRHF & 6.85 & 6.5 & 4.1 & 3.05 & 2.8 & 5.11 & 6.6 & 7.8 & 5.35\\
& +SimPO & 7.2 & 6.7 & 4.5 & 3.5 & 2.85 & \textbf{5.68} & 6.85 & 7.8 & 5.63\\
& +TDPO & 7.3 & 6.8 & 4.25 & 3.0 & 2.95 & 5.6 & 6.6 & 7.95 & 5.55\\
& +SePO-rand & 8.0 & 6.6 & 3.8 & 1.35 & 1.7 & 4.9 & 6.9 & 8.58 & 5.23\\
& +SePO (Ours) & \textbf{8.24} & \textbf{7.83} & 4.65 & 3.05 & \textbf{3.2} & 5.4 & \textbf{8.0} & \textbf{9.8} & \textbf{6.38}\\
\midrule
\multirow{8}{*}{\textbf{LLaMA2-Chat-13B}} & Base & 6.9 & 6.85 & 4.3 & 3.15 & 3.3 & 6.3 & 7.15 & 7.65 & 5.7 \\
& +DPO & 7.28 & 6.9 & 4.81 & 4.1 & 3.77 & 6.6 & 7.48 & 8.15 & 5.84\\
& +IPO & 7.4 & 6.82 & 4.3 & 4.3 & 3.5 & 6.83 & 7.2 & 7.4 & 5.76\\
& +RRHF & 6.45 & 6.25 & 4.25 & 3.7 & 3.25 & 6.65 & 7.2 & 7.7 & 5.73\\
& +SimPO & 6.85 & 6.85 & 4.3 & 3.2 & 3.0 & 6.5 & 7.3 & 7.6 & 5.7\\
& +TDPO & \textbf{8.2} & 7.15 & 4.7 & \textbf{4.3} & 3.84 & 6.5 & 7.7 & 8.6 & 6.37\\
& +SePO-rand & 6.65 & 7.1 & 4.62 & 2.5 & 2.17 & 6.4 & 8.0 & 7.85 & 5.66\\
& +SePO (Ours) & 8.05 & \textbf{7.8} & \textbf{5.15} & 3.85 & \textbf{4.25} & \textbf{7.25} & \textbf{8.2} & \textbf{8.85} & \textbf{6.86}\\
\bottomrule
\end{tabular}}
\caption{Fine-grained performance of SePO and other baseline methods on MT-Bench. For SePO, the oracle models are based on TinyLLaMA-1.1B and Pythia-1B, trained on the full UltraFeedback dataset. The modeled reward function is then used to select the top-30\% tokens of chosen and rejected responses.}
\label{tab:fg-results}
\end{table*}

On MT-Bench, SePO outperforms all other methods on average scores. Due to the widely discussed poor separability of overall scores for MT-Bench, we look into category-based evaluations that provide fine-grained assessments. As shown, SePO achieves the best performances on 70\% of comparisons on Assistant and QA, indicating its significant improvement on subjective tasks that require high-level intention understanding and writing skills. However, SePO outperforms baseline methods in math and coding on only 40\% of the comparisons, underperforming baseline methods such as IPO and SimPO on several policy models. A possible reason is that objective tasks such as math and coding require coherent logic along the token-level MDP for response generation~\citep{xie2024monte,chen2024step,lai2024step}, while SePO is only optimized on selected tokens, which brings discontinuity in learning the logic during training. Baseline methods that optimize all tokens enable policy models to learn the full chain of reasoning and show advantages in objective scenarios.

\subsection{Hyper-parameter Selection for SePO}
As shown in Eqn. \ref{eq:optim}, the training process for SePO mainly involves two hyper-parameters: $\gamma$ controls the scaling of the rewards, $\lambda$ is controls the contrastive margin. To facilitate fair evaluations on other crucial factors such as token selection ratios and training data scale for oracle model, here we perform parameter search for the above two hyper-parameters, where we fix the token selection ratio as $k_w=k_l=0.3$ and the selected tokens from a TinyLLaMA-based oracle model trained on the full UltraFeedback dataset. We first tune $\gamma$ with $\lambda=0$ on TinyLLaMA-Chat, LLaMA2-Chat-7B, and LLaMA2-Chat-13B and select the value with the highest LC win rates on AlpacaEval 2.0. Due to the similar structure between our training objective and that of SimPO, we follow their settings and search within the following range: $\gamma\in[2.0, 2.1, 2.2, 2.3, 2.4, 2.5]$. The results are shown in Figure \ref{fig:hyper_search}(a). According to the results, we do not observe a significant alteration of model performance on all three policy models as $\gamma$ increases. For all models, the performance stabilizes after $\gamma$ increasing from 2.1. These results show that SePO performance is not sensitive to $\gamma$, a conclusion similar to that of SimPO.

\begin{figure*}[htpb]
\centering
\includegraphics[width=14cm,height=4.667cm]{./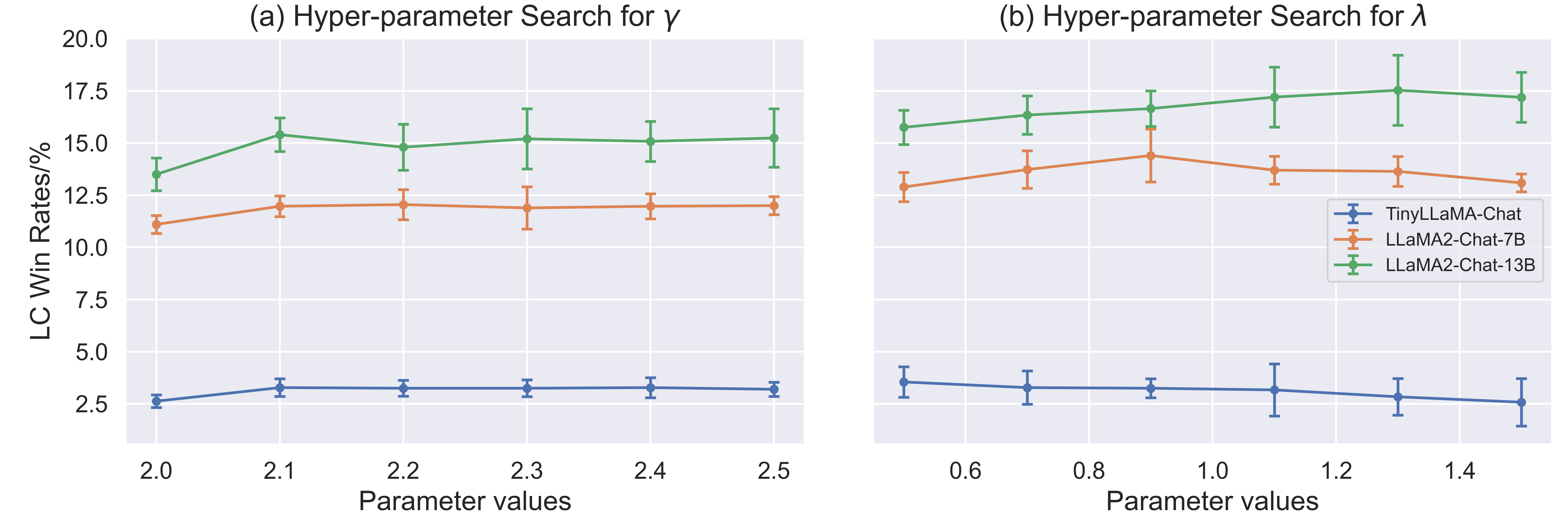}
\caption{The hyper-parameter search results on $\gamma$ and $\lambda$ for SePO. The performance is determined by LC win rates performance on the AlpacaEval 2.0 evaluation benchmark. For each hyper-parameter setting, we run the algorithm three times and show the average results.}
\label{fig:hyper_search}
\end{figure*}

Therefore, we set $\gamma=2.1$ when searching for the best $\lambda$ value. Based on the above selected value for $\gamma$, we further search $\lambda\in[0.5, 0.7, 0.9, 1.1, 1.3, 1.5]$. The results are shown in Figure \ref{fig:hyper_search}(b). According to the results, increasing $\lambda$ from 0 generally improves SePO performance on all three policy models, where results with $\lambda=0.5$ outperforms results with $\gamma=2.1$ and $\lambda=0$ on all policy models. Further increasing $\lambda$ leads to improved win rates, but with different peak performance. Models with stronger capabilities require larger margin values to reach the best performance. For example, increasing $\lambda$ from 0.5 leads to decreased LC win rates for TinyLLaMA-Chat model, while for LLaMA2-Chat-7B and LLaMA2-Chat-13B this peak value becomes 0.9 and 1.3. These results show that stronger models can generalize well to larger margin values, while weak model can over-fit to the training data when forced with larger margins.

\section{Case Studies}
We provide two cases of the key token selection process to provide a more intuitive view on how SePO works, and analyse the results of case 1 in detail. The two cases are provided in Figure \ref{fig:case_1} and \ref{fig:case_2}. We utilize the TinyLLaMA-based oracle model trained on the full UltraFeedback dataset to score the tokens. In these cases we display the tokens with highest values in the chosen response and the tokens with lowest values in the rejected response. We show the top 50\% key tokens for each response. Specifically, for chosen responses, the 10\% key tokens are marked \textcolor{blue}{blue}, the 30\% key tokens (except the 10\% key tokens) are marked \textcolor{purple}{purple}, and the 50\% key tokens (except the 30\% key tokens) are marked \textcolor{green}{green}. For rejected responses, the 10\% key tokens are marked \textcolor{red}{red}, the 30\% key tokens (except the 10\% key tokens) are marked \textcolor{orange}{orange}, and the 50\% key tokens (except the 30\% key tokens) are marked \textcolor{brown}{brown}. We expect these cases to provide intuitions into how the oracle models select key information for supervising the policy models.

According to the visualization in case 1, the top 10\% tokens tend to focus on structural features that can be generalized across instances. For example, the chosen response assigns much attention to the starting sentences: "Developing a daily habit of drawing can be challenging but with consistent practice and a few tips, it can become..." which can significantly raise the interest of the users and increase their trust on the responses. In contrast, for the rejected response, the model priorities suppressing the starting sentence of "As an AI language model I cannot personally develop habits for you.", which is negative in emotion and can decrease the users interest in continual engagement with the policy model. For the 30\% tokens, the oracle model starts to focus on the actual content of the response. In case 1, the brown parts cover the one-phrase summary of each point and improves the policy model on generating preferred suggestions for the specified query. For the 50\% tokens, the oracle model starts to focus and optimize on the details of each point. On the chosen response, the oracle model selects key statements and entities to instruct the policy model to generate factual and useful suggestions. On the rejected response, the oracle model selects less practical points such as "surround yourself with inspiration" to suppress the policy model. The oracle model also recognizes false co-references such as "everyone has \textit{their} own creative style and pace" in the rejected response.

Based on the above case studies, we conclude that the oracle model trained with DPO can rationally select key tokens for optimizing the target policy model in a explanable manner, which further proves the effectiveness of the proposed SePO algorithm.

\section{Gradient Analysis for SePO}
Similar to DPO~\citep{rafailov2024direct} and SimPO~\citep{meng2024simpo}, we calculate the gradient of SePO to provide a intuitive view of the optimization process. Different from the above works, we break down the SePO gradient $\nabla_{\theta}\mathcal{L}_{SePO}$ calculation to token level as follows:

\begin{small}
\begin{equation}
\begin{aligned}
     &-\gamma\mathbb{E}_{\left(q,  y_w, y_l \right) \sim \mathcal{D}} d_{\theta}\cdot\\
     &[\frac{1}{|y_w|\cdot k_w\%}\sum_{i=1}^{|y_w|}\mathbf{I}_k^w(s(y_i))\nabla_{\theta}\log\pi_{\theta}(y_i|q, y_{<i})-\\
    &\frac{1}{|y_l|\cdot k_l\%}\sum_{i=1}^{|y_l|}\mathbf{I}_k^l(s(y_i))\nabla_{\theta}\log\pi_{\theta}(y_i|q, y_{<i})]
\end{aligned}
\end{equation}
\end{small}

where 

\begin{small}
\begin{equation}
\begin{aligned}
    d_{\theta} &= \sigma(\frac{\gamma}{|y_l|\cdot k_l\%}\sum_{i=1}^{|y_l|}\mathbf{I}_k^l(s(y_i))\log\pi_{\theta}(y_i|q, y_{<i})\\
    &-\frac{\gamma}{|y_w|\cdot k_w\%}\sum_{i=1}^{|y_w|}\mathbf{I}_k^w(s(y_i))\log\pi_{\theta}(y_i|q, y_{<i})+\lambda)
\end{aligned}
\end{equation}
\end{small}

Firstly, similar to SimPO, the gradient weights $d_{\theta}$ of SePO is determined by likelihood of response pairs, where the weights will be higher for samples where the target policy model assigns higher likelihood to in-favored responses. The difference is that SePO only considers the incorrectly likelihoods of selected tokens that are recognized by the oracle models as key tokens. This design allows SePO to adjust weights and focuses on responses that have more misplaced key tokens, which improves the efficiency of the optimization process. 

Secondly, the updated gradients of SePO is also length-normalized, which shows an alleviation effect of bias towards redundant sequences, a feature similar to SimPO. In addition, the gradient of a token is only updated when it is selected by the indication functions $\mathbf{I}_k^l(\cdot)$ and $\mathbf{I}_k^w(\cdot)$ as key tokens. This design prevents the policy model from over-fitting to every token on the chosen/rejected responses, which allows the algorithm to update on the most effective supervision signals and ignore the irrelevant tokens that widely exist in response pairs, especially in lengthy responses. Especially, the SePO paradigm allows the model to selectively ignore optimization on end-of-sentence tokens, which further alleviates the over-optimization on lengthy responses. We believe it is also crucial for our successful application to weak-to-strong generalization, as weak data tends to include lots of noisy supervision signals, which can be filtered by the reward function to avoid weight updating during the SePO optimization process.

\newpage
\begin{figure*}[htpb]
\centering
\includegraphics[width=14cm,height=16.31cm]{./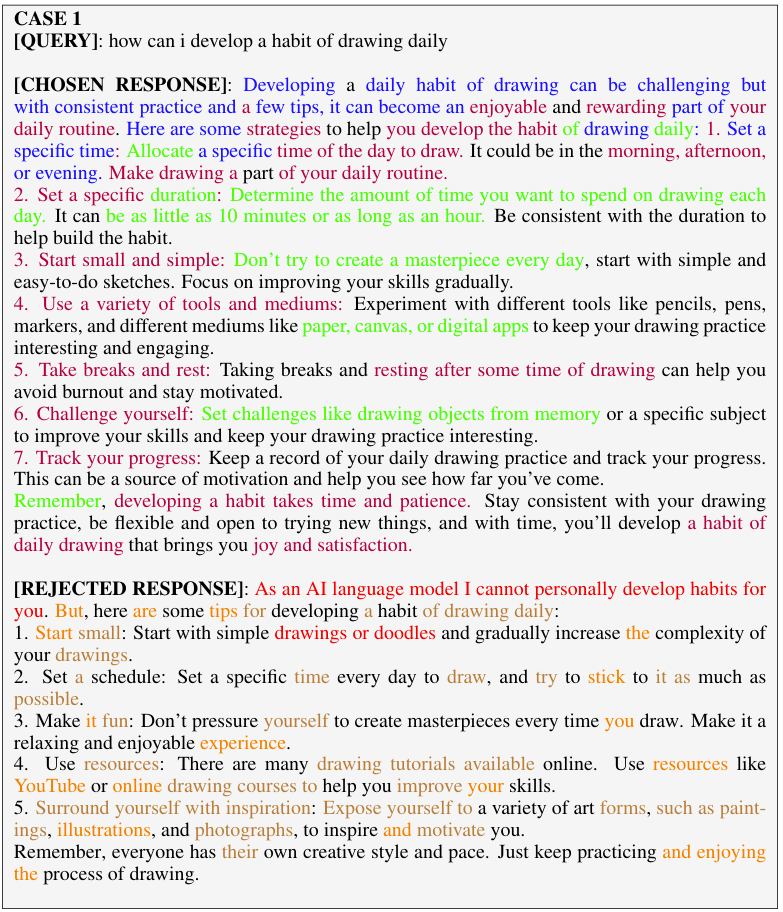}
\caption{Case 1 for visualization of the key token selection process.}
\label{fig:case_1}
\end{figure*}

\newpage
\begin{figure*}[htpb]
\centering
\includegraphics[width=14cm,height=7.41cm]{./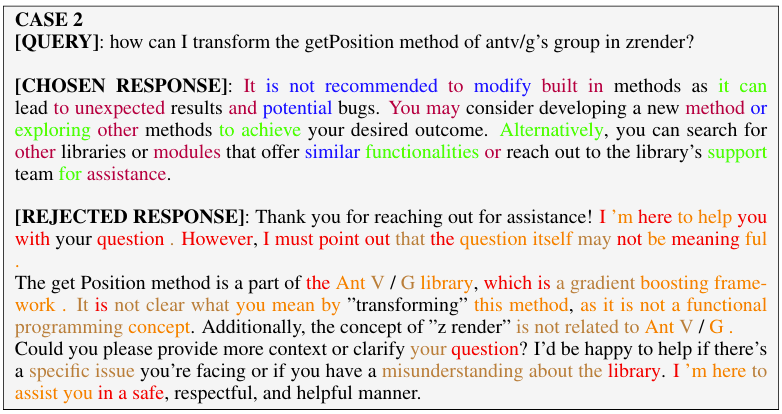}
\caption{Case 2 for visualization of the key token selection process.}
\label{fig:case_2}
\end{figure*}

\newpage
\begin{table*}[!hbt]
\resizebox{1.\textwidth}{!}{
\begin{tabular}{lc}
\toprule
\multicolumn{2}{c}{\textbf{Training Information}}\\
Base Library & \href{https://huggingface.co/}{Huggingface Transformers} \\
Fine-tuning Platform & \href{https://github.com/OpenRLHF/OpenRLHF}{OpenRLHF} \\
GPU Hardware & 4$\times$ NVIDIA Tesla A100 80GB GPUs \\
CPU Hardware & 8$\times$ Intel(R) Xeon(R) Gold 6342 CPU cores per GPU \\
Hardware Speedup & Flash Attention 2~\citep{dao2023flashattention}\\
Quantization for training & BF16\\
Supervised Fine-tuning Strategy & Full Optimization \\
Alignment Strategy & Full Optimization \\
Optimizer & Adam \\
Training Epochs &  \\
\ -SFT & 2 \\
\ -Preference Alignment & 1 \\
Batch sizes &  \\
\ -SFT & 512 \\
\ -Preference Alignment & 128 \\
Max Position Embeddings &  \\
\ -Pythia & 2048 \\
\ -TinyLLaMA & 2048 \\
\ -LLaMA2-(7B,13B) & 4096 \\
\ -LLaMA3-8B & 8192 \\
SFT Learning rate & 1e-5 \\
Preference Alignment Learning rate &  \\
\ -TinyLLaMA-Chat & 5e-7\\
\ -LLaMA2-Chat-7B & 5e-7\\
\ -LLaMA2-Chat-13B & 5e-7\\
\ -LLaMA3-Base-8B & 5e-7\\
\ -LLaMA3-Instruct-8B & 5e-7\\
\ -Pythia-2.8B & 7e-7\\
\ -Pythia-6.9B & 7e-7\\
Warm-up ratio & 0.05 \\
\midrule
\multicolumn{2}{c}{\textbf{Dataset Information}}\\
Dataset Name & \href{https://huggingface.co/datasets/HuggingFaceH4/ultrachat_200k}{\textbf{UltraChat-200K}} \\
License & MIT\\
Train/Val & 207,865/23,110\\
Data Filtering Method & Rule-based Filtering\\
Dataset Name & \href{https://huggingface.co/datasets/HuggingFaceH4/ultrafeedback_binarized}{\textbf{UltraFeedback}} \\
License & MIT\\
Train/Val & 61,135/2,000\\
Preference source & GPT-4\\
% \multicolumn{2}{c}{\textbf{Policy Models}}\\
% TinyLLaMA-Chat & \href{https://huggingface.co/TinyLlama/TinyLlama-1.1B-Chat-v1.0}{Model Link}\\
% LLaMA2-Chat-7B & \href{https://huggingface.co/meta-llama/Llama-2-7b-chat-hf}{Model Link}\\
% LLaMA2-Chat-13B & \href{https://huggingface.co/meta-llama/Llama-2-13b-chat-hf}{Model Link}\\
% Pythia-2.8B & \href{https://huggingface.co/EleutherAI/pythia-2.8b}{Model Link}\\
% Pythia-6.9B & \href{https://huggingface.co/EleutherAI/pythia-6.9b}{Model Link}\\
% \midrule
% \multicolumn{2}{c}{\textbf{Oracle Models}}\\
% Pythia-70m & \href{https://huggingface.co/EleutherAI/pythia-70m}{Model Link}\\
% Pythia-160m & \href{https://huggingface.co/EleutherAI/pythia-160m}{Model Link}\\
% Pythia-410m & \href{https://huggingface.co/EleutherAI/pythia-410m}{Model Link}\\
% Pythia-1B & \href{https://huggingface.co/EleutherAI/pythia-1b}{Model Link}\\
% Pythia-1.4B & \href{https://huggingface.co/EleutherAI/pythia-1.4b}{Model Link}\\
% TinyLLaMA-1.1B & \href{https://huggingface.co/TinyLlama/TinyLlama-1.1B-intermediate-step-1431k-3T}{Model Link}\\
\bottomrule
\end{tabular}}
\caption{Details about SePO training and datasets.}\label{tab:training_detail}
\end{table*}
\end{document}